\documentclass[journal,twoside,web]{ieeecolor}

\usepackage{generic}
\usepackage{cite}
\usepackage{amsmath,amssymb,amsfonts}
\usepackage{algorithmic}
\usepackage{graphicx}
\usepackage{textcomp}

\usepackage{graphicx}
\usepackage{epsfig} 
\usepackage{cite}
\usepackage{lcsys}

\newtheorem{theorem}{Theorem}[section]
\newtheorem{lemma}[theorem]{Lemma}
\newtheorem{proposition}[theorem]{Proposition}

\newtheorem{definition}[theorem]{Definition}
\newtheorem{remark}[theorem]{Remark}
\newtheorem{assumption}[theorem]{Assumption}

\usepackage{mathtools}
\usepackage{bm}
\usepackage{siunitx}
\usepackage{subcaption}

\begin{document}

\def\BibTeX{{\rm B\kern-.05em{\sc i\kern-.025em b}\kern-.08em
    T\kern-.1667em\lower.7ex\hbox{E}\kern-.125emX}}
\markboth{\journalname, VOL. XX, NO. XX, XXXX 2017}
{Author \MakeLowercase{\textit{et al.}}: Preparation of Papers for IEEE Control Systems Letters (August 2022)}

\newcommand{\SE}{\mathrm{SE}(3)}
\newcommand{\SO}{\mathrm{SO(3)}}
\newcommand{\R}{\mathbb{R}}
\newcommand{\Vset}{\mathcal{V}}
\newcommand{\Eset}{\mathcal{E}}
\newcommand{\Aset}{\mathcal{A}}
\newcommand{\Oset}{\mathcal{O}}
\newcommand{\Iset}{\mathcal{I}}
\newcommand{\Gcal}{\mathcal{G}}
\newcommand{\Wcal}{\mathcal{W}}

\title{Multi-Robot Bearing-based Pose Estimation via Angle Rigidity*}

\author{
    J. Francisco Presenza$^{1}$,
    Leonardo J. Colombo$^{2}$,
    Ignacio Mas$^{3}$, \IEEEmembership{Senior Member, IEEE},
    and \\
    Juan I. Giribet$^{3}$, \IEEEmembership{Senior Member,~IEEE}
    \thanks{\copyright~2026 IEEE. Personal use of this material is permitted.
    Permission from IEEE must be obtained for all other uses, in any current or
    future media, including reprinting/republishing this material for advertising
    or promotional purposes, creating new collective works, for resale or
    redistribution to servers or lists, or reuse of any copyrighted component
    of this work in other works.}
    \thanks{*This work was supported by Universidad de Buenos Aires under Project UBACyT 20020220100053BA.  L. Colombo acknowledges financial support from Grant PID2022-137909NB-C21 funded by MCIN/AEI/ 10.13039/501100011033 and iRoboCity2030-CM, Robótica Inteligente para Ciudades Sostenibles (TEC-2024/TEC-62). 
    Corresponding author J.~Francisco Presenza.}
    \thanks{$^{1} $Institute of Engineering Technology and Sciences (CONICET-UBA), Buenos Aires, Argentina.
    {\tt\small jpresenza@fi.uba.ar}}%
    \thanks{$^{2}$ Centre for Automation and Robotics (CSIC-UPM), Madrid, Spain.
    {\tt\small leonardo.colombo@csic.es}}%
    \thanks{$^{3}$ Artificial Intelligence and Robotics Laboratory (CONICET-UdeSA), Victoria, Argentina.
    {\tt\small imas@udesa.ar, jgiribet@udesa.ar}}%
}

\maketitle
\thispagestyle{empty}

\begin{abstract}
This letter proposes a novel distributed pose estimator for multi-robot systems evolving on $\SE$. 
The robots' positions are estimated in $\R^3$ using angles computed from body-frame bearings, without requiring orientation knowledge.
The robots' orientations are then recovered in $\SO$ from the estimated positions, together with bearing and bearing-rate measurements.
The estimator accommodates directed sensing topologies and requires only \textit{infinitesimal angle rigidity} (IAR), thereby relaxing the requirement, common in bearing-based approaches, that every robot acquire at least two bearings.
Unlike existing angle-based schemes, the proposed method also estimates the robots' orientations.
We prove local uniform exponential stability of the observer, assuming that a subset of robots executes persistently exciting motions.
These theoretical results are corroborated through numerical simulations.
\end{abstract}

\begin{IEEEkeywords}
Observers for nonlinear systems,
Network analysis and control,
Algebraic/geometric methods, Robotics
\end{IEEEkeywords}

\section{Introduction}
\label{sec:introduction}

\IEEEPARstart{D}{istributed} pose estimation using only inter-agent measurements is a fundamental problem in multi-robot systems.
Over the last few years, extensive research has focused on \textit{bearing-based} methods, which exploit line-of-sight direction measurements between robots.
This interest is largely motivated by the ability to extract bearings from low-cost sensors (e.g., monocular cameras) with minimal processing.

Bearings are typically measured in each robot's body frame and transformed into a common frame for position estimation.
To do this, knowledge of the robot orientations is required.
However, absolute orientation estimation is often difficult in practice, since commonly used magnetic-field measurements are highly susceptible to disturbances and may be unavailable in some scenarios.
Alternatively, some strategies estimate the orientations from inter-agent relative rotation measurements \cite{Trinh2018CCTA,Li2020TCNS}.
Nevertheless, measuring relative rotations is non-trivial, as it often requires substantially more processing, calibration, or additional equipment (e.g., fiducial markers).

To avoid this, many works propose jointly estimating position and orientation directly from body-frame bearings.
Early approaches relied on the \textit{infinitesimal bearing rigidity} (IBR) of the sensing network to ensure that the poses are observable, although limited to $\mathbb{R}^3 \times \mathbb{S}^1$ \cite{Schiano2016IROS}. 
More recently, \cite{VanTran2020AUT,Cao2021AUT,Boughellaba2022CDC,Boughellaba2023CSL} operate on $\SE$ but still exhibit limited applicability, especially under sensing constraints which induce changes in the sensing topology.
This is partly because these methods depend on carefully designed topologies, involving \textit{(a)} cascaded interaction graphs, i.e., robot $i$ only uses information from robots $j < i$, and \textit{(b)} reciprocal bearing measurements, i.e., $i$ observes $j$ if and only if $j$ observes $i$.
Additionally, a key limitation of existing bearing-based methods, including \cite{Schiano2016IROS}--\cite{Boughellaba2023CSL}, is that they require a minimum number of measured bearings per robot: one in $\mathbb{S}^1$ and two in $\SO$.
This undermines the use of bearing sensors (e.g., cameras) for other tasks (e.g., mapping, target tracking).

More recently, \textit{angle rigidity theory} has emerged as a promising avenue for multi-robot localization \cite{Jing2021TAC,Chen2022AUTb,Chen2022TSP}.
Angle measurements, which are obtained from the dot product of measured bearings, are independent of the robots' orientations.
Hence, positions can be estimated without orientation knowledge.
However, existing angle-based localization approaches still face practical limitations.
For example, most are only applicable to \textit{undirected} graphs and require special graph topologies (e.g., tetrahedral).
Moreover, current angle-based methods do not provide orientation estimation, which is required in many robotic applications (e.g., map fusion).

In this work, we introduce a distributed observer based on body-frame bearing measurements to estimate the time-varying poses of a multi-robot system in $\SE$.
{For position estimation, the observer employs angles computed from bearing pairs. The sensing graph may be \textit{directed}, while the realization in $\R^3$ defined by the robots' positions must satisfy  \textit{infinitesimal angle rigidity} (IAR).
As shown in \cite{Presenza2026arXiv}, IAR for directed graphs allows sensing configurations in which some robots acquire only one bearing, or none at all, representing an advantage of our approach with respect to existing bearing-based methods.}
For orientation estimation, the proposed observer uses the position estimates together with bearing and bearing-rate measurements.
To the best of our knowledge, this is the first bearing-based pose estimator that does not require any of the following: \textit{(a)} reciprocal measurements (graphs can be directed), \textit{(b)} a minimum number of measured bearings per robot (some may not measure bearings at all), and \textit{(c)} additional sensing capabilities (e.g., inter-agent rotations).
In this work, we study the stability of the proposed observer for fixed sensing topologies.
Its formulation based on rigidity theory makes it suitable for integration with controllers that preserve rigidity under topology changes \cite{Presenza2026arXiv}. A formal stability analysis of such integration is left for future work.

This letter is organized as follows. Section \ref{sec:preliminaries} presents basic definitions and an overview of angle rigidity theory. In Section \ref{sec:observer}, we introduce the design of the distributed pose estimator, while the stability analysis is presented in \ref{sec:stability}.
Simulation results are presented in Section \ref{sec:simulations}, and Section \ref{sec:conclusions} provides final conclusions and directions for future work.

\section{Preliminaries}
\label{sec:preliminaries}

\subsection{Notation}

Vectors are written as $x \in \R^n$ and matrices  as $A \in \R^{m \times n}$.
The standard basis is $\{\mathfrak{e}_i\}_{i=1}^n \subset \R^n$.
The vector of all ones is $\bm{1}_n \in \R^n$ and the identity matrix is $\bm{I}_n \in \R^{n \times n}$.
The null space is $\mathrm{Null}(\cdot)$ and the Kronecker product is $\otimes $.
The unit $n$-sphere is $\mathbb{S}^{n-1}\!=\!\{v\!\in\!\R^n : \|v\|\!=\!1\}$ and $\Pi : \mathbb{S}^2 \to \R^{3 \times 3}$ satisfies $\Pi(x) \coloneqq \bm{I}_3 - x x^\top$.
For a $n \times n$ symmetric matrix, we index its eigenvalues in non-decreasing order: $\lambda_1(\cdot) \leq \ldots \leq \lambda_n(\cdot)$.
Let $\SO$ denote the special orthogonal group and $\SE = \R^3\rtimes \SO$.
Let $S:\R^3\to\R^{3 \times 3}$ be the skew-symmetric-valued map $S(x)y=x\times y$ for all $x,y\in\R^3$ where $\times$ denotes the cross product.
A column stack of $n$ vectors (or matrices) of equal dimensions indexed by $\mathcal{S} = \{1, \ldots, n\}$ is represented as $\bm{x} = [x_i]_{i \in \mathcal{S}}$ (or $\bm{A} = [A_i]_{i \in \mathcal{S}}$).
If each $x_i \in \R^3$, we let $\bm{S}(\bm{x}) \coloneqq [S(x_i)]_{i \in \mathcal{S}}$.
If the dimensions of $\bm{A} = [A_i]_{i \in \mathcal{S}}$ and $\bm{B} = [B_i]_{i \in \mathcal{S}}$ are appropriate, then $\bm{A} \circ\bm{B} = [A_i B_i]_{i \in \mathcal{S}}$ denotes the block-wise product.

\subsection{Robot and Network Models}

Let $(p_i, R_i)$ represent the pose of robot $i$, with position $p_i \in \R^3$ and orientation $R_i \in \SO$.
Let $\Vset = \{1, \ldots, N\}$ denote the robot indices, $N \geq 3$.
Then, $(\bm{p}, \bm{R})$, with $\bm{p} = [p_i]_{i \in \Vset} \in \R^{3N}$ and $\bm{R} = [R_i]_{i \in \Vset} \in \SO^N$, represents the joint configuration.
We consider first-order robot kinematics $$\dot{p}_i = R_i u_i, \quad \dot{R}_i = R_i S(\omega_i),$$ where $u_i,\omega_i \in \R^3$ denote the body-frame control actions of $i \in \Vset$. The sensing network is modeled by a \textit{directed} graph $\Gcal = (\Vset, \Eset)$ with vertex set $\Vset$ and edge set $\Eset \subseteq \Vset \times \Vset$.
The set of (outgoing) neighbors of $i$ is $\Oset_i \coloneqq \{j : (i, j) \in \Eset\}$.
Here, $j \in \Oset_i$ indicates that robot $i$ measures $\beta_{ij} \coloneqq R_i^\top (p_j-p_i)/\|p_j - p_i\|$.
In this work, we do not require reciprocal bearing measurements, i.e., $j \in \Oset_i$ does not imply $i \in \Oset_j$.
The (cosine of the) angle measurement obtained when $\{j, k\} \subseteq \Oset_i$ is $\alpha_{ijk} \coloneqq \beta_{ij}^\top \beta_{ik}$.
We define $\Aset \coloneqq \{(i, j, k) : \{j, k\} \subseteq \Oset_i, j < k\}$ to be the set of angle indices generated by every pair of bearings observed by the same robot. Hence $\Aset$ is determined by the directed edge set $\Eset$, reflecting how angles are measured in practice using camera-like sensors.
In addition to bearing measurements, we assume that robots are able to communicate over a graph $\overline{\Gcal}$ taken as the undirected counterpart of $\Gcal$.
The infinitesimal angle rigidity of $(\Gcal, \bm{p})$ (Assumption \ref{ass:uniform_IAR}) ensures that $\overline{\Gcal}$ is connected.

Now, consider an \textit{anchor robot} $a \in \Vset$ and express the robots' poses in a reference frame attached to $a$, $$\bm{q}\coloneqq[q_i]_{i\in\Vset}, \bm{Q}\coloneqq[Q_i]_{i\in\Vset}, \ q_i \coloneqq R_a^\top(p_i-p_a), Q_i \coloneqq R_a^\top R_i.$$
We assume that $a$ can measure both the bearing and the distance to two robots $b,c\in\Vset$, which eliminates the similarity ambiguity that arises in angle-based localization; see Subsection \ref{sec:angle_rigidity}.
It holds that,
\begin{align}
    \beta_{ij} &= Q_i^\top \mathfrak{q}_{ij}, \ \mathfrak{q}_{ij}\!\coloneqq\!q_{ij}/\|q_{ij}\|, \ q_{ij}\!\coloneqq q_j-q_i,
    \label{eq:bearing_definition} \\ 
    \dot{q}_i &= Q_i u_i - u_a - S(\omega_a) q_i,
    \
    \dot{Q}_i = Q_i S(\omega_i - Q_i^\top \omega_a).
    \label{eq:robot_dynamic_model_a}
\end{align}
{In \eqref{eq:robot_dynamic_model_a}, $u_a,\omega_a$ are the anchor's control actions, assumed to be disseminated via a leader-follower consensus estimator~\cite{Franchi2016TCNS}.}

\subsection{Angle Rigidity Theory}
\label{sec:angle_rigidity}
Consider a \textit{framework} $(\Gcal, \bm{q})$ and let $\mathcal A$ be the set of angle indices.
Define the \textit{angle function} as $\bm{\alpha}_{\Gcal} : \R^{3N} \to \R^{|\Aset|}$ such that $\bm{\alpha}_{\Gcal}(\bm{q}) = [\alpha_{ijk}]_{(i, j, k) \in \Aset}$.
Let the \textit{angle rigidity matrix} $A_{\Gcal}(\bm{q}) \in \R^{|\Aset| \times 3N}$ be the differential of $\bm{\alpha}_{\Gcal}$, i.e., $\delta \bm{\alpha}_{\Gcal} = A_{\Gcal}(\bm{q}) \delta \bm{q}$.
To make the structure explicit, consider the row $A_{ijk}(\bm{q}) \in \R^{1 \times 3N}$ corresponding to $(i, j, k) \in \Aset$,
\begin{equation}
    A_{ijk} = [\bm{0} \ \underset{(\text{block} \, i)}{-(\chi_{ijk}+\chi_{ikj})^\top} \ \bm{0} \ \underset{(\text{block} \, j)}{\chi_{ijk}^\top} \ \bm{0} \ \underset{(\text{block} \, k)}{\chi_{ikj}^\top} \ \bm{0}]
\label{eq:angle_rigidity_matrix}
\end{equation}
where $\chi_{ijk} \coloneqq \Pi(\mathfrak{q}_{ij}) \mathfrak{q}_{ik} / \|q_{ij}\|$.
Consider the subspace
\begin{equation}
    \!\mathcal{T}(\bm{q})\!\coloneqq\!\{\kappa\bm{q}+\bm{S}(\bm{1}_{\!N}\otimes\omega)\circ\bm{q}\,+\bm{1}_N\otimes x\!: \kappa\!\in\!\R, \omega,x\!\in\!\R^3\},
\label{eq:angle_inf_trivial}
\end{equation}
i.e., tangent vectors at $\bm{q}$ induced by similarity transformations: translations, rotations, and scalings.
Provided that $\{q_i\}_{i \in \Vset} \subset \R^3$ is not contained in a hyperplane, $\operatorname{dim} \mathcal{T}(\bm{q})\!=\!7$.
Moreover, it holds that $\mathcal{T}(\bm{q}) \subseteq \operatorname{Null}(A_{\Gcal}(\bm{q}))$.
\begin{definition}
    A framework $(\Gcal, \bm{q})$ is said to be \textit{infinitesimally angle rigid} (IAR) if $\operatorname{Null}(A_{\Gcal}(\bm{q})) = \mathcal{T}(\bm{q})$.
    Equivalently, if $\lambda_8(\bm{L}_{\Gcal}(\bm{q})) > 0$ where $\bm{L}_{\Gcal}(\bm{q}) \coloneqq A_{\Gcal}^\top(\bm{q}) A_{\Gcal}(\bm{q})$.
    \label{def:inf_ang_rigid}
\end{definition}
The \textit{angle rigidity eigenvalue}, $\lambda_8(\bm{L}_{\Gcal}(\bm{q}))$, is important for angle-based localization, as it enables the recovery of the positions up to a similarity transformation.
This is established by Proposition \ref{prop:inf_angle_rigidity}, a classical result in rigidity theory that follows from applying the inverse function theorem to $\bm{\alpha}_{\Gcal}$.
\begin{proposition}
    Let $(\Gcal, \bm{q})$ be IAR. Then, there exists an open neighborhood $\mathcal{U}$ of $\bm{q}$ such that every $\bm{q}' \in \bm{\alpha}_{\Gcal}^{-1}(\bm{\alpha}_{\Gcal}(\bm{q})) \cap \mathcal{U}$
    is similar to $\bm{q}$.
    \label{prop:inf_angle_rigidity}
\end{proposition}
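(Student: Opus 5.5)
The plan is to show that $\bm{\alpha}_{\Gcal}$ has constant rank near $\bm{q}$, that the similarity orbit through $\bm{q}$ is a $7$-dimensional submanifold contained in a level set of $\bm{\alpha}_{\Gcal}$, and then to apply the constant rank theorem, which is the inverse function theorem in disguise. Throughout we assume, as in the statement of $\operatorname{dim}\mathcal{T}(\bm{q})=7$, that $\{q_i\}$ is not contained in a hyperplane.

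\textbf{Step 1: constant rank.} IAR gives $\operatorname{rank} A_{\Gcal}(\bm{q}) = 3N-7$. Lower semicontinuity of rank yields $\operatorname{rank} A_{\Gcal}(\bm{q}')\ge 3N-7$ for $\bm{q}'$ near $\bm{q}$. Non-coplanarity is an open condition, so $\operatorname{dim}\mathcal{T}(\bm{q}')=7$ nearby. Since $\mathcal{T}(\bm{q}')\subseteq\operatorname{Null}(A_{\Gcal}(\bm{q}'))$, the rank is also at most $3N-7$. Hence the rank equals $3N-7$ on an open neighborhood $\mathcal{U}_0$ of $\bm{q}$. We also take $\mathcal{U}_0$ small enough that $q_i'\neq q_j'$ for every edge, so that $\bm{\alpha}_{\Gcal}$ is smooth there.

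\textbf{Step 2: level set is a manifold.} By the constant rank theorem, there are local coordinates around $\bm{q}$ in which $\bm{\alpha}_{\Gcal}$ is a linear projection of rank $3N-7$. Consequently, $\bm{\alpha}_{\Gcal}^{-1}(\bm{\alpha}_{\Gcal}(\bm{q}))\cap\mathcal{U}$ is, for some smaller neighborhood $\mathcal{U}$, a connected embedded $7$-dimensional submanifold $M$ through $\bm{q}$.

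\textbf{Step 3: the orbit fills the level set.} Consider the map $\Phi:(\kappa,R,x)\mapsto (e^{\kappa}(\bm{I}_N\otimes R)\bm{q}+\bm{1}_N\otimes x)$ from the $7$-dimensional similarity group. Angles are invariant under similarities, so the image of $\Phi$ lies in the level set. The differential of $\Phi$ at the identity has image $\mathcal{T}(\bm{q})$, which has dimension $7$. Thus $\Phi$ is an immersion near the identity, and its image near $\bm{q}$ is a $7$-dimensional submanifold contained in $M$. By invariance of domain, or the local immersion theorem, a $7$-dimensional submanifold inside a $7$-dimensional submanifold is open in it. Shrinking $\mathcal{U}$ so that $M\cap\mathcal{U}$ is contained in this open image, every $\bm{q}'$ in $\bm{\alpha}_{\Gcal}^{-1}(\bm{\alpha}_{\Gcal}(\bm{q}))\cap\mathcal{U}$ equals $\Phi(\kappa,R,x)$ for some similarity, which is the claim.

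\textbf{Main obstacle.} The delicate point is the shrinking in Step 3. We must ensure that the level set intersected with $\mathcal{U}$ has no pieces outside the orbit image. This holds because the constant rank theorem gives a slice chart in which the level set within the chart domain is exactly one $7$-dimensional plane. We therefore choose $\mathcal{U}$ to be a small chart-ball, in which that plane slice is connected. The orbit image is open in the slice, and it is also closed in it: a limit of similar configurations with non-coplanar points stays similar, since the scale is bounded away from $0$ and $\infty$ near $\bm{q}$. By connectedness the orbit image is the whole slice.
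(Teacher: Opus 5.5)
Your proof is correct and follows the route the paper indicates: the paper gives no proof beyond saying the result follows from the inverse function theorem applied to $\bm{\alpha}_{\Gcal}$, and your constant-rank step (lower semicontinuity of rank combined with $\mathcal{T}(\bm{q}')\subseteq\operatorname{Null}(A_{\Gcal}(\bm{q}'))$) is what makes that remark rigorous, since $\bm{\alpha}_{\Gcal}$ itself is not a local diffeomorphism. Two minor points: the open--closed connectedness argument in your last paragraph is not needed, because the shrinking at the end of Step 3 (inside the slice chart) already finishes the proof; and your non-coplanarity proviso can be weakened to non-collinearity, which IAR forces since $A_{\Gcal}=0$ at collinear configurations, without changing the argument.
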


\section{Observer Design}
\label{sec:observer}

Motivated by the structure of the proposed observer, we split the robots into two groups.
First, the \textit{sensing robots}: those that obtain at least two bearings and can hence compute angles, $\Vset^s\!\coloneqq\!\{i\!\in\!\Vset\!: |\mathcal{O}_i|\!\geq\!2\}$.
Second, the \textit{free robots}: those that obtain one or no bearings, $\Vset^f \coloneqq \Vset \setminus \Vset^s$.
It follows that the anchor $a \in \Vset^s$.
The observer is designed as follows.
First, it recovers the positions of all $i \in \Vset$ in the anchor $a$'s reference frame using the angles computed from bearing measurements.
Second, it uses bearing measurements to recover the orientations of robots $i \in \Vset^s$.
Third, it exploits bearing-rate measurements and suitable robot motions to recover the orientations of robots $i \in \Vset^f$.
For the remainder of the paper, for any stacked vector $\bm{x} = [x_i]_{i\in\Vset}$, we denote by $\bm{x}^s$ and $\bm{x}^f$ the subvectors containing the entries associated with sensing robots and free robots, respectively.

\subsection{Assumptions}
We state the assumptions required to establish the stability of the proposed observer (Theorem~\ref{thm:full_cascade_lues}). 
First, Assumption \ref{ass:separation} states that collisions between robots are avoided.
Assumption \ref{ass:uniform_IAR} is an observability condition (see Subsection~\ref{sec:position_correction}) and can be enforced in practice with a rigidity maintenance controller \cite{Presenza2026arXiv}.
Assumption \ref{ass:noncollinear_outgoing_bearings} ensures that the outgoing bearings of each sensing robot are not collinear, which fails to hold only on a measure-zero set. If required, it could be incorporated as an additional objective into controller \cite{Presenza2026arXiv}.
Assumption \ref{ass:dynamics_regularity} requires sufficiently regular control actions and bounded inter-robot distances, both practical requirements.
Assumption \ref{ass:excitation} requires the velocity of each free robot $\dot{p}_i$ to be persistently exciting (PE), i.e., not confined to a fixed direction over each time window of length $T$.
Finally, Assumption \ref{ass:exact_consensus} idealizes the dissemination layer. 
In practice, when $\dot{u}_a$ and $\dot{\omega}_a$ are bounded, leader-follower consensus \cite{Franchi2016TCNS} yields an ultimate bound on the consensus error that decreases as the consensus gain increases.
In the ideal case of zero consensus error, the  observer is locally uniformly exponentially stable (Theorem~\ref{thm:full_cascade_lues}).
Hence, standard robustness arguments suggest that sufficiently small errors lead to small pose-estimation errors.
A formal analysis of the coupled consensus-observer dynamics is left for future work.
\begin{assumption}[Uniform inter-agent separation]
    There exists $\ell > 0$ such that $\|q_{ij}(t)\| > \ell$, $\forall t \geq 0$.
    \label{ass:separation}
\end{assumption}
\begin{assumption}[Uniform IAR]
    There exists $\ell>0$ such that $\lambda_8(\bm{L}_{\Gcal}(\bm{q}(t))) > \ell$, $\forall t \geq 0$.
    \label{ass:uniform_IAR}
\end{assumption}
\begin{assumption}[Uniform non-collinearity of bearings]
    There exists $\ell\!>\!0$ such that:
    \begin{enumerate}
        \item $\Pi(\beta_{ab}(t)) + \Pi(\beta_{ac}(t)) \succeq\ell \bm{I}$, $\forall t \geq 0$.
        \item For each $i\!\in\!\Vset^s\setminus\{a\}$, $\textstyle\sum_{j \in \Oset_i}\!\Pi(\beta_{ij}(t))\succeq\ell \bm{I}$, $\forall t \geq 0$.
    \end{enumerate}
    \label{ass:noncollinear_outgoing_bearings}
\end{assumption}
\begin{assumption}[Regularity]
    For each $i \in \Vset$, the signals $q_i(t), u_i(t), \omega_i(t)$ are sufficiently regular and there exists $\mu_q, \mu_{u}, \mu_{\omega} >0$ such that $\|q_i(t)\| < \mu_q$, $\|u_i(t)\| < \mu_{u}$, $\|\omega_i(t)\| < \mu_{\omega}$,  $\forall t \geq 0$.
    \label{ass:dynamics_regularity}
\end{assumption}
\begin{assumption}[Persistent excitation of the free robots]
\label{ass:excitation}
    There exist constants $T>0$ and $\ell>0$ such that for each $i \in \Vset^f$, $\textstyle\int_t^{t+T} {S(\dot{p}_i(\tau))^\top S(\dot{p}_i(\tau))} d\tau\succeq \ell \bm{I}$, $\forall t \geq 0$.
\end{assumption}
\begin{assumption}
    Consensus errors are neglected: each robot has access to the anchor inputs $u_a(t), \omega_a(t)$.
    \label{ass:exact_consensus}
\end{assumption}

\subsection{Position Correction}\label{sec:position_correction}

We begin by presenting a gradient-based position estimator based on angle measurements.
Let $\hat{\bm{q}}\!=\![\hat{q}_i]_{i \in \Vset}$ be the estimated joint position, and let $\tilde{\bm{q}}\!\coloneqq\!\hat{\bm{q}}-\bm{q}$ be the estimation error.
Consider a fixed configuration $\bm{q}$.
Let $\tilde{\bm{\alpha}}_{\Gcal}(\hat{\bm{q}}) = [\tilde{a}_{ijk}]_{(i,j,k)\in\mathcal A} \coloneqq \bm{\alpha}_{\Gcal}(\hat{\bm{q}}) - \bm{\alpha}_{\Gcal}(\bm{q})$ denote the angle-estimation errors, where $\bm{\alpha}_{\Gcal}(\bm{q})$ represents the measured angles.
Then, for $\kappa_s>0$, we define the cost function
\begin{equation}
    \mathcal{L}(\hat{\bm{q}}) \coloneqq
    \tfrac{1}{2} \textstyle\sum_{(i, j, k) \in \Aset} \tilde{\alpha}_{ijk}^2 + \frac{\kappa_s}{2}
    (\|\tilde{q}_a\|^2 + \|\tilde{q}_b\|^2 + \|\tilde{q}_c\|^2).
\label{eq:position_cost}
\end{equation}
Here, $\tilde{q}_a=\hat{q}_a$, $\tilde{q}_b = \hat{q}_b - q_b$, and $\tilde{q}_c = \hat{q}_c - q_c$, where $q_b, q_c$ are available from the anchor's
bearing and distance measurements.
From Proposition \ref{prop:inf_angle_rigidity}, IAR implies that, locally, all minimizers of the first term of \eqref{eq:position_cost} are similar to $\bm{q}$.
From Assumptions \ref{ass:separation}  and \ref{ass:noncollinear_outgoing_bearings}, $q_b$ and $q_c$ are non-zero and non-collinear; hence, the second term \eqref{eq:position_cost} removes the similarity ambiguity (see Lemma \ref{lem:positive_hessian}). 
It follows that $\hat{\bm{q}} = \bm{q}$ is an isolated global minimizer of $\mathcal{L}$.
The gradient of $\mathcal{L}$ with respect to the $i$-th estimate is
\begin{equation}
\begin{split}
    &\!\!\nabla_{\hat{q}_i} \mathcal{L}(\hat{\bm{q}})\!= - \textstyle\sum_{j, k:\,\{j, k\} \subseteq \mathcal{O}_i} \tilde{\alpha}_{ijk} (\hat{\chi}_{ijk} + \hat{\chi}_{ikj}) \\
    &\!\!+ \textstyle\sum_{j, k:\,\{i, k\} \subseteq \mathcal{O}_j} \tilde{\alpha}_{jik} \hat{\chi}_{jik}
    + \kappa_s (\delta_{ia} \tilde{q}_a +  \delta_{ib} \tilde{q}_b + \delta_{ic} \tilde{q}_c),
\end{split}
\label{eq:position_gradient_i}
\end{equation}
where $\hat{\chi}_{ijk} \coloneqq \Pi(\hat{\mathfrak{q}}_{ij}) \hat{\mathfrak{q}}_{ik} / \|\hat{q}_{ij}\|$ with $\hat{\mathfrak{q}}_{ij} \coloneqq \hat{q}_{ij} / \|\hat{q}_{ij}\|$, and $\delta_{i j}$ is Kronecker's delta. 
\begin{remark}
    Robot $i$ can compute~\eqref{eq:position_gradient_i} in a distributed manner by receiving $\hat{q}_j$ from each $j\!\in\!\Oset_i$ and $\tilde{\alpha}_{jik} \hat{\chi}_{jik}$ from each $j$ such that $\{i, k\} \subseteq \Oset_j$, which are locally available to each corresponding $j$.
    \label{rmk:pos_est_dist}
\end{remark}

\begin{lemma}[Uniform positivity of the position Hessian]
\label{lem:positive_hessian}
Let $(\Gcal, \bm{q}(t)), \, t\!\geq\!0$ satisfy Assumptions \ref{ass:separation}--\ref{ass:dynamics_regularity}. Then the Hessian $\bm{H}(\bm{q}(t))$ of $\mathcal{L}$ is uniformly positive definite along $\bm{q}(t)$, i.e., there exists $\ell\!>\!0$ such that $\bm{H}(\bm{q}(t))\!\succeq\!\ell\bm{I}$, $\forall t\!\geq\!0$.
\end{lemma}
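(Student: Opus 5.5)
At the true configuration $\hat{\bm q}=\bm q$ every angle residual $\tilde\alpha_{ijk}$ and every anchor residual vanishes. Hence the Hessian of $\mathcal L$ reduces to its Gauss--Newton part,
\begin{equation*}
\bm H(\bm q)=A_{\Gcal}^\top(\bm q)A_{\Gcal}(\bm q)+\kappa_s\,\bm E^\top\bm E=\bm L_{\Gcal}(\bm q)+\kappa_s\bm E^\top\bm E ,
\end{equation*}
where $\bm E\in\R^{9\times 3N}$ selects the blocks $a,b,c$. The second-order terms $\tilde\alpha_{ijk}\nabla^2\alpha_{ijk}$ drop out because $\tilde\alpha_{ijk}=0$. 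Both summands are positive semidefinite. So it suffices to show that their null spaces intersect trivially, and then to make the resulting positivity uniform in $t$.

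\textbf{Step 1 (pointwise definiteness).} By Assumption~\ref{ass:uniform_IAR}, $\mathrm{Null}(\bm L_{\Gcal})=\mathcal T(\bm q)$. Take $v=\kappa\bm q+\bm S(\bm 1_N\otimes\omega)\circ\bm q+\bm 1_N\otimes x$ and require $\bm E v=0$.
\begin{itemize}
\item At $a$, since $q_a=0$, we get $x=0$.
\item At $b$ and $c$ we get $\kappa q_b+\omega\times q_b=0$ and $\kappa q_c+\omega\times q_c=0$. Taking the inner product with $q_b$ gives $\kappa\|q_b\|^2=0$, so $\kappa=0$ by Assumption~\ref{ass:separation}.
\item Then $\omega$ is parallel to both $q_b$ and $q_c$. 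These are non-collinear by Assumption~\ref{ass:noncollinear_outgoing_bearings}(1), since $\beta_{ab}$ and $\beta_{ac}$ are rotations of $\mathfrak q_{ab},\mathfrak q_{ac}$. Hence $\omega=0$.
\end{itemize}
Therefore $v=0$, and $\bm H(\bm q)\succ0$.

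\textbf{Step 2 (uniformity).} This is the main obstacle, because a pointwise argument gives no lower bound independent of $t$. I would use a quantitative splitting. Let $P$ be the orthogonal projector onto $\mathcal T(\bm q)$. For $v=v_\perp+v_T$ we have
\begin{equation*}
v^\top\bm L_{\Gcal}v\ge \ell\|v_\perp\|^2 \quad\text{(Assumption~\ref{ass:uniform_IAR})}.
\end{equation*}
For the tangential part we need a uniform bound $\|\bm E v_T\|\ge c\|v_T\|$ on $\mathcal T(\bm q)$. To get it, write $v_T$ via the coordinates $(\kappa,\omega,x)$; since $q_a=0$, $\bm E v_T=(x,\,\kappa q_b+\omega\times q_b+x,\,\kappa q_c+\omega\times q_c+x)$.
\begin{itemize}
\item The map $(\kappa,\omega,x)\mapsto\bm E v_T$ has a smallest singular value bounded below uniformly. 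It depends continuously on $(q_b,q_c)$, which ranges over a compact set: $\ell\le\|q_b\|,\|q_c\|\le\mu_q$ by Assumptions~\ref{ass:separation} and~\ref{ass:dynamics_regularity}, and the angle between $q_b,q_c$ is bounded away from $0,\pi$ by Assumption~\ref{ass:noncollinear_outgoing_bearings}(1). Step~1 shows it is positive on this set, so a positive minimum exists.
\item The map $(\kappa,\omega,x)\mapsto v_T$ has norm bounded by $C(1+\mu_q)\sqrt N$ using $\|q_i\|<\mu_q$.
\end{itemize}
Together these give $\|\bm E v_T\|\ge c\|v_T\|$ with $c>0$ independent of $t$.

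\textbf{Step 3 (combination).} Combine the two bounds with the standard inequality
\begin{equation*}
\|\bm E v\|^2\ge \tfrac12\|\bm E v_T\|^2-\|\bm E\|^2\|v_\perp\|^2 ,
\end{equation*}
where $\|\bm E\|=1$. This yields
\begin{equation*}
v^\top\bm H v\ge \bigl(\ell-\kappa_s\bigr)\|v_\perp\|^2+\tfrac{\kappa_s c^2}{2}\|v_T\|^2 .
\end{equation*}
If $\kappa_s\le\ell/2$, both coefficients are positive and uniform, which gives the claim. For general $\kappa_s$, use instead $\bm H\succeq\min(1,\kappa_s/\epsilon)\,(\bm L_{\Gcal}+\epsilon\bm E^\top\bm E)$ with $\epsilon=\ell/2$, which reduces to the previous case.

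An alternative for Step 2, which I would keep as a fallback, is a compactness argument on the set of admissible normalized configurations modulo rotation and translation. Rather than pass to a limit, I prefer the explicit projector argument above, because the limits of IAR frameworks need the eigenvalue bound to persist, which it does by continuity of $\lambda_8$.
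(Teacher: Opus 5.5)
Your proof is correct, but it obtains uniformity by a different route from the paper.

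The pointwise part is the same. The paper also writes $\bm H(\bm q)=\bm L_{\Gcal}(\bm q)+\kappa_s(\mathfrak e_a\mathfrak e_a^\top+\mathfrak e_b\mathfrak e_b^\top+\mathfrak e_c\mathfrak e_c^\top)\otimes\bm I_3$ and splits $\bm v$ into components in $\mathcal T(\bm q)$ and $\mathcal T(\bm q)^\perp$. It then argues that a null vector must be an infinitesimal similarity fixing $q_a,q_b,q_c$, and is therefore zero.

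For uniformity, the paper argues non-constructively. It passes to the closure $\mathcal Q$ of the trajectory, which is compact by Assumption~\ref{ass:dynamics_regularity}. It notes that Assumptions~\ref{ass:separation}--\ref{ass:noncollinear_outgoing_bearings} survive on $\mathcal Q$ by continuity, and concludes from continuity of $\lambda_1(\bm H(\cdot))$ that it attains a positive minimum on $\mathcal Q$.

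You instead proceed in three pieces:
\begin{itemize}
\item You apply the IAR margin of Assumption~\ref{ass:uniform_IAR} directly on $\mathcal T(\bm q)^\perp$. Your Step~1 implicitly gives $\dim\mathcal T(\bm q)=7$, which is what makes $\lambda_8$ the relevant eigenvalue there.
\item You prove uniform coercivity $\|\bm E v_T\|\ge c\|v_T\|$ on $\mathcal T(\bm q)$. Compactness is needed only for the anchor pair $(q_b,q_c)$; the other robots enter only through the explicit factor $\sqrt N(1+\mu_q)$.
\item You combine the two using $\|a+b\|^2\ge\tfrac12\|a\|^2-\|b\|^2$ and the rescaling $\bm H\succeq\min(1,\kappa_s/\epsilon)(\bm L_{\Gcal}+\epsilon\bm E^\top\bm E)$.
\end{itemize}

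The paper's route is shorter, and it is the same template the paper reuses in Lemma~\ref{lem:noncollinear_ingoing_bearings}. Yours is quantitative. Since $c\le\|\bm E\|=1$, it yields for instance $\bm H\succeq\tfrac{c^2}{2}\min(\ell/2,\kappa_s)\,\bm I$, with $\ell$ the IAR margin. This shows how the Hessian margin degrades with a weak angle rigidity eigenvalue, a small $\kappa_s$, or a poorly conditioned anchor triangle. It would also make the gain condition $\kappa_q\ell_h>\mu_c+(\mu_d+\kappa_Q\mu_b)^2/(4\kappa_Q\ell_k)$ in Theorem~\ref{thm:full_cascade_lues} explicitly checkable rather than merely existential.
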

\begin{proof}
    Let $\mathcal{Q} \coloneqq \operatorname{cl} \{\bm{q}(t) \in \R^{3N} :t \geq 0\}$ be the closure of the position trajectory, which is compact since $\bm{q}(t)$ is bounded (Assumption \ref{ass:dynamics_regularity}).
    
    Assumptions \ref{ass:separation}--\ref{ass:noncollinear_outgoing_bearings} hold in $\mathcal{Q}$, due to the continuity of $\|\cdot\|$ and $\lambda_8(\cdot)$.
    That is, for each $\bm{q} \in \mathcal{Q}$, $q_b, q_c$ are non-zero and non-collinear, and $\lambda_8(\bm{L}_{\Gcal}(\bm{q}))>0$ thus $\operatorname{Null}(\bm{L}_{\Gcal}(\bm{q})) = \mathcal{T}(\bm{q})$.
    The Hessian is
    \begin{equation*}
        \bm{H}(\bm{q}) = \bm{L}_{\Gcal}(\bm{q}) + \kappa_s\left(\mathfrak{e}_a\mathfrak{e}_a^\top+\mathfrak{e}_b\mathfrak{e}_b^\top+\mathfrak{e}_c\mathfrak{e}_c^\top\right)\otimes \bm{I}_3.
    \end{equation*}
    where $\mathfrak{e}_i\in\R^N$.
    Now, let $\bm{v}\!=\![v_i]_{i \in \Vset}\in\R^{3N}$ such that $\bm{v} = \bm{n}+\bm{r}$ where $\bm{n}\in\mathcal{T}(\bm{q})$ and $\bm{r}\in\mathcal{T}(\bm{q})^\perp$. Thus,
    \begin{align*}
        \bm{v}^\top &\bm{H}(\bm{q}) \bm{v} = \bm{r}^\top \bm{L}_{\Gcal}(\bm{q}) \bm{r} + \kappa_s(\|v_a\|^2 + \|v_b\|^2 + \|v_c\|^2) \\
        &\geq \lambda_8(\bm{L}_{\Gcal}(\bm{q})) \|\bm{r}\|^2 + 
        \kappa_s(\|v_a\|^2 + \|v_b\|^2 + \|v_c\|^2).
    \end{align*}
    Thus, if $\bm{v}^\top \bm{H}(\bm{q}) \bm{v} = 0$, then $v_a= v_b = v_c = 0$ and $\bm{r} = 0$.
    Hence, $\bm{v} \in \mathcal{T}(\bm{q})$ corresponds to a velocity generated by a similarity transformation that fixes $q_a$, $q_b$, and $q_c$.
    But the only similarity transformation that does so is the identity map; thus, $\bm{v} = 0$.
    Consequently, $\lambda_1(\bm{H}(\bm{q})) > 0$ for all $\bm{q} \in \mathcal{Q}$. Since $\mathcal{Q}$ is compact and $\lambda_1(\bm{H}(\cdot))$ is continuous, it attains a positive minimum, and uniformity follows.
\end{proof}

\subsection{Coordinate Velocity Reconstruction}
{In this subsection, we propose a method to reconstruct the robots' coordinate velocities, $\dot{\bm{q}}$.
To do this, we introduce the map $\bm{\eta}$ such that $\dot{\bm{q}} = \bm{\eta}(\bm{q}, \bm{Q}^s)$. For brevity, we omit the dependence of $\bm{\eta}$ on the control inputs $\bm{u} \coloneqq [u_i]_{i\in\Vset}$, $\bm{\omega} \coloneqq [\omega_i]_{i\in\Vset}$, as well as the bearings $\bm{\beta} \coloneqq [\beta_{ij}]_{(i, j)\in\Eset}$ and their time derivatives $\dot{\bm{\beta}}$.
The map $\bm{\eta}$ is designed to avoid using $\bm{Q}^f$ for two reasons.
First, it enables orientation-error correction of the free robots: comparing the reconstructed $\dot{\bm{q}}^f$ with $\bm{u}^f$ via \eqref{eq:robot_dynamic_model_a} reveals information about $\bm{Q}^f$.
Second, it allows us to include a feedforward term in the position part of estimator \eqref{eq:upstream}-\eqref{eq:downstream} without using $\bm{Q}^f$, see Remark \ref{rmk:cascade}.

For sensing robots, we adopt the standard reconstruction given by \eqref{eq:robot_dynamic_model_a}. Indeed, the coordinate velocity $\dot{q}_i$ and its estimate $\hat{\upsilon}_i$ can be reconstructed as
\begin{align}
   \dot{q}_i &= \eta^s_i(\bm{q}, \bm{Q}^s) \coloneqq Q_i u_i - u_a - S(\omega_a) q_i,
    \label{eq:eta_leader} \\
    \hat{\upsilon}_i &\coloneqq \eta^s_i(\hat{\bm{q}}, \hat{\bm{Q}}^s) = \hat{Q}_i u_i - u_a - S(\omega_a) \hat{q}_i.
    \label{eq:vel_leader}
\end{align}
For the free robots, first, define $\Iset^s_i \coloneqq \{j \in \Vset^s: (j, i) \in \Eset\}$, i.e., the set of sensing robots observing free robot $i$.
Lemma \ref{lem:noncollinear_ingoing_bearings} shows that each free robot receives at least two non-collinear bearing measurements from sensing robots, which guarantees the well-posedness of the reconstructed $\dot{q}_i$ \eqref{eq:eta_follower}.
\begin{lemma}[Uniform non-collinearity of the in-bearings]
    Let $(\Gcal, \bm{q}(t)), \, t \geq 0$ satisfy  Assumptions \ref{ass:separation}--\ref{ass:dynamics_regularity}. 
    Then, for each $i \in \Vset^f$, there exists $\ell>0$ such that
    \begin{equation}
        P_i(\bm{q}(t)) \coloneqq \textstyle\sum_{j \in \Iset^s_i} \Pi(\mathfrak{q}_{ji}(t)) \succeq\ell \bm{I},
        \qquad \forall t \geq 0.
        \label{eq:noncollinear_ingoing_bearings}
    \end{equation}
    \label{lem:noncollinear_ingoing_bearings}
\end{lemma}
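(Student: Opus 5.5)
The plan is to argue by contradiction on the closure of the trajectory, in the same way as the proof of Lemma~\ref{lem:positive_hessian}. Let $\mathcal{Q}\coloneqq\operatorname{cl}\{\bm{q}(t):t\ge0\}$; it is compact by Assumption~\ref{ass:dynamics_regularity}. By continuity, Assumptions~\ref{ass:separation} and~\ref{ass:uniform_IAR} hold at every $\bm{q}\in\mathcal{Q}$: the nodes are separated and $\lambda_8(\bm{L}_{\Gcal}(\bm{q}))>0$, so $\operatorname{Null}(A_{\Gcal}(\bm{q}))=\mathcal{T}(\bm{q})$. The matrix $P_i(\bm{q})$ is continuous on $\mathcal{Q}$, because all $\|q_{ji}\|>0$ there. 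It therefore suffices to show that $\lambda_1(P_i(\bm{q}))>0$ for each fixed $\bm{q}\in\mathcal{Q}$. Compactness then gives a positive minimum, and hence the uniform bound. This pointwise positivity also covers the case $\Iset^s_i=\emptyset$, where $P_i=0$.

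\textbf{Structure of the columns of $A_{\Gcal}$ for a free robot.} Fix $i\in\Vset^f$. Since $|\Oset_i|\le1$, no angle $(i,j,k)\in\Aset$ has $i$ as its apex. By \eqref{eq:angle_rigidity_matrix}, block $i$ of $A_{\Gcal}$ is therefore nonzero only in rows $(j,i,k)$ (or $(j,k,i)$). In such a row $j$ has two out-bearings, so $j\in\Vset^s$ and, since $i\in\Oset_j$, $j\in\Iset^s_i$. The corresponding block entry is $\chi_{jik}^\top=\mathfrak{q}_{jk}^\top\Pi(\mathfrak{q}_{ji})/\|q_{ji}\|$.

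Now suppose $P_i(\bm{q})w=0$ for some unit $w$. Then $w^\top\Pi(\mathfrak{q}_{ji})w=0$, so $\Pi(\mathfrak{q}_{ji})w=0$ for every $j\in\Iset^s_i$. Consequently every nonzero entry of block $i$ annihilates $w$. Defining $\bm{v}\coloneqq\mathfrak{e}_i\otimes w$, which moves robot $i$ only, we get $A_{\Gcal}(\bm{q})\bm{v}=0$. By IAR, $\bm{v}\in\mathcal{T}(\bm{q})$.

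\textbf{Ruling out $\bm{v}\in\mathcal{T}(\bm{q})$.} Write $M\coloneqq\kappa\bm{I}_3+S(\omega)$. Membership $\bm{v}\in\mathcal{T}(\bm{q})$ means $Mq_l+x=0$ for all $l\neq i$ and $Mq_i+x=w\neq0$. The cases are as follows.
\begin{itemize}
\item If $\kappa\neq0$, then $M$ is invertible (its eigenvalues are $\kappa$ and $\kappa\pm\mathrm{i}\|\omega\|$). Hence all $q_l$ with $l\ne i$ coincide, which contradicts separation because $N\ge3$.
\item So $\kappa=0$ and $S(\omega)(q_l-q_m)=0$ for all $l,m\neq i$. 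If the $N-1$ other points are not collinear, this forces $\omega=0$, then $x=0$ and $w=0$, a contradiction.
\item The remaining case is the main obstacle: all points other than $q_i$ lie on a line parallel to $\omega$, and $\bm{v}$ is an infinitesimal rotation of $q_i$ about that line. In this case the whole configuration is planar. I would contradict $\lambda_8>0$ directly by exhibiting a null vector of $A_{\Gcal}(\bm{q})$ outside $\mathcal{T}(\bm{q})$, for instance by combining the rotation of $q_i$ about the line with an independent trivial motion. Alternatively, I would invoke the remark after \eqref{eq:angle_inf_trivial} that IAR with $\dim\mathcal{T}=7$ presupposes a configuration not contained in a plane.
\end{itemize}
Once this degenerate case is excluded, $P_i(\bm{q})\succ0$ on $\mathcal{Q}$, and compactness completes the proof.
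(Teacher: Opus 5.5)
Your overall strategy matches the paper's: compactness of the closure, then a singular direction $w$ of $P_i$ gives $A_{\Gcal}(\bm{q})(\mathfrak{e}_i\otimes w)=0$ through the block structure at a free vertex, and this contradicts IAR. The paper takes $\delta\bm{q}=\mathfrak{e}_i\otimes\mathfrak{q}_{ji}$. The gap is in showing $\mathfrak{e}_i\otimes w\notin\mathcal{T}(\bm{q})$. Your third case is left open, and neither proposed repair works.

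\textbf{Why the degenerate case cannot be closed as proposed.} Planarity does not contradict $\lambda_8>0$ when $N=3$. Take a static nondegenerate triangle with $\Oset_1=\{2,3\}$, $\Oset_2=\{1,3\}$, $\Oset_3=\emptyset$, and $a=1$, $b=2$, $c=3$. The two angles at robots $1$ and $2$ fix the shape, so $\operatorname{Null}(A_{\Gcal})$ is $7$-dimensional and equals $\mathcal{T}(\bm{q})$. All assumptions hold, and robot $3$ is free.
\begin{itemize}
\item The remark after \eqref{eq:angle_inf_trivial} only gives a sufficient condition for $\dim\mathcal{T}=7$; non-collinearity already suffices. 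It does not say that IAR excludes planar configurations.
\item Adding a trivial motion to an element of $\mathcal{T}(\bm{q})$ never produces a vector outside $\mathcal{T}(\bm{q})$.
\item In this example, $\mathfrak{e}_3\otimes(\omega\times q_{13})$ with $\omega\parallel q_{12}$ (rotation about the line through $q_1,q_2$) lies in $\operatorname{Null}(A_{\Gcal})\cap\mathcal{T}(\bm{q})$. So once you discard what $w$ is, no contradiction is available.
\end{itemize}
For $N\ge 4$ your planar route could be made to work by pushing a single vertex out of the plane. The lemma, however, is stated for $N\ge3$.

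\textbf{The missing step.} Keep using $\Pi(\mathfrak{q}_{ji})w=0$, i.e.\ $w\parallel q_{ji}$.
\begin{itemize}
\item If $\mathfrak{e}_i\otimes w\in\mathcal{T}(\bm{q})$, subtracting the conditions at $i$ and $j$ gives $w=\kappa q_{ji}+\omega\times q_{ji}$.
\item Pick $l\notin\{i,j\}$, which exists since $N\ge3$. Then $\kappa q_{jl}+\omega\times q_{jl}=0$, and taking the inner product with $q_{jl}$ together with separation forces $\kappa=0$.
\item Hence $w=\omega\times q_{ji}\perp q_{ji}$. Combined with $w\parallel q_{ji}$, this gives $w=0$, a contradiction with no case split needed.
\end{itemize}
When $\Iset^s_i=\emptyset$, you have $P_i=0$ and are free to choose $w$. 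Take $w=\mathfrak{q}_{li}$ for some $l\neq i$ rather than an arbitrary unit vector, since an arbitrary $w$ can land exactly in your degenerate case. The same computation then applies, with $l$ in place of $j$ and a third index in place of $l$.
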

\begin{proof}
    Define $\mathcal{Q} \coloneqq \operatorname{cl} \{\bm{q}(t) \in \R^{3N} :t \geq 0\}$.
    Assumptions \ref{ass:separation}--\ref{ass:noncollinear_outgoing_bearings} hold in $\mathcal{Q}$ by the same argument used in the proof of Lemma \ref{lem:positive_hessian}.
    
    First, we show that $P_i(\bm{q}) \succ 0$ for all $\bm{q} \in \mathcal{Q}$ and all $i \in \Vset^f$. 
    Fix $\bm{q} \in \mathcal{Q}$ and $i \in \Vset^f$ and suppose, by contradiction, that $P_i(\bm{q})$ is singular. 
    If $\Iset^s_i \neq \emptyset$, since $P_i(\bm{q})$ is singular, all bearings $\mathfrak{q}_{ji}$ are collinear.
    Thus, choose $j \in \Iset^s_i$ and let $\delta \bm{q} \coloneqq \mathfrak{e}_i \otimes \mathfrak{q}_{ji}$.
    Then, $A_{\Gcal}(\bm{q}) \delta \bm{q} = 0$ and $\delta \bm{q} \notin \mathcal{T}(\bm{q})$, see \eqref{eq:angle_rigidity_matrix} and \eqref{eq:angle_inf_trivial}, contradicting IAR.
    If $\Iset^s_i\!=\!\emptyset$, an analogous argument follows with $\delta \bm{q}\!\coloneqq \mathfrak{e}_i\!\otimes x \in\!\R^{3N}$ for any $x\!\neq\!0 \in\!\R^3$.

    From the implication proved above, $\lambda_1(P_i(\bm{q}))\!>\!0$ for each $\bm{q} \in \mathcal{Q}$.
    Since $\lambda_1(P_i(\cdot))$ is continuous and $\mathcal{Q}$ is compact, it attains a positive minimum, and uniformity follows.
\end{proof}

Now, we reconstruct $\dot{q}_i, \, i \in \Vset^f$. Differentiating $\mathfrak{q}_{ji} = Q_j \beta_{ji}$ and rearranging yields the linear equations on $\dot{q}_i$,
\begin{equation}
    \Pi(Q_j \beta_{ji}) (\dot{q}_i - \dot{q}_j) = \|q_{ij}\| (\dot{Q}_j \beta_{ji} + Q_j \dot{\beta}_{ji}), \ \forall j \in \Iset^s_i.
    \label{eq:linear_eqs}
\end{equation}
Proposition \ref{pro:linear_sys_solution} provides the unique solution to this system.

\begin{proposition}
    Let $(\Gcal,\bm{q}(t)),t\!\geq\!0$ satisfy Assumptions~\ref{ass:separation}--\ref{ass:dynamics_regularity}.
    Then, for each $i\in\Vset^f$, the unique solution of~\eqref{eq:linear_eqs} is
    \begin{equation}
    \begin{split}
        \!\!\dot{q}_i &= \eta^f_i(\bm{q}, \bm{Q}^s) \coloneqq\big(\textstyle\sum_{j \in \Iset^s_i} M_{ji}\big)^{-1}
        \textstyle\sum_{j \in \Iset^s_i} b_{ji}, \\
        \!\!M_{ji}&\!\coloneqq\!\Pi(Q_j \beta_{ji}), b_{ij}\!\coloneqq\!M_{ji}\dot{q}_j\!+\!\|q_{ji}\|(\dot{Q}_j\beta_{ji}\!+\!Q_j \dot{\beta}_{ji}).
    \end{split}
    \label{eq:eta_follower}
    \end{equation}
    \label{pro:linear_sys_solution}
\end{proposition}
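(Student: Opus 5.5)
The plan is to treat \eqref{eq:linear_eqs} as a stacked linear system in the single unknown $\dot{q}_i$, with one block equation per $j \in \Iset^s_i$. I will show that the true velocity $\dot{q}_i$ satisfies it, and that the normal equations have a unique solution because the coefficient matrix is uniformly invertible by Lemma~\ref{lem:noncollinear_ingoing_bearings}.

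First, I derive the equations and confirm that they hold along the true trajectory. Since $\mathfrak{q}_{ji} = Q_j \beta_{ji}$ by \eqref{eq:bearing_definition}, the product rule gives $\dot{\mathfrak{q}}_{ji} = \dot{Q}_j \beta_{ji} + Q_j \dot{\beta}_{ji}$. On the other hand, differentiating the normalization $\mathfrak{q}_{ji} = q_{ji}/\|q_{ji}\|$ gives $\dot{\mathfrak{q}}_{ji} = \Pi(\mathfrak{q}_{ji})\dot{q}_{ji}/\|q_{ji}\|$. This derivative is well defined because $\|q_{ji}\| > \ell > 0$ by Assumption~\ref{ass:separation}, and the signals are differentiable by Assumption~\ref{ass:dynamics_regularity}. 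Equating the two expressions and multiplying by $\|q_{ji}\| = \|q_{ij}\|$ yields \eqref{eq:linear_eqs}, with $\dot{q}_{ji} = \dot{q}_i - \dot{q}_j$. Hence the true $\dot{q}_i$ is a solution.

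Second, I sum the equations over $j \in \Iset^s_i$. Moving the known term $M_{ji}\dot{q}_j$ to the right-hand side gives $\big(\sum_j M_{ji}\big)\dot{q}_i = \sum_j b_{ji}$, where $b_{ji}$ is as defined in \eqref{eq:eta_follower}; the subscript $b_{ij}$ in that display should be read as $b_{ji}$. Since $M_{ji} = \Pi(Q_j\beta_{ji}) = \Pi(\mathfrak{q}_{ji})$, we have $\sum_j M_{ji} = P_i(\bm{q}(t))$. By Lemma~\ref{lem:noncollinear_ingoing_bearings}, $P_i(\bm{q}(t)) \succeq \ell \bm{I}$, so it is invertible with $\|P_i^{-1}\| \le 1/\ell$ uniformly in $t$. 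Therefore $\dot{q}_i = P_i^{-1}\sum_j b_{ji}$, which is the claimed formula.

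For uniqueness, suppose $x$ and $y$ both satisfy every equation in \eqref{eq:linear_eqs}. Subtracting the equations gives $M_{ji}(x-y) = 0$ for every $j \in \Iset^s_i$. Summing these yields $P_i(x-y) = 0$, and since $P_i$ is invertible, $x = y$.

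The remaining point is that the right-hand side must involve only $\bm{q}$ and $\bm{Q}^s$, and not $\bm{Q}^f$. This holds because every $j \in \Iset^s_i$ is a sensing robot. Thus $\dot{q}_j = \eta^s_j(\bm{q},\bm{Q}^s)$ by \eqref{eq:eta_leader}, and $\dot{Q}_j = Q_j S(\omega_j - Q_j^\top\omega_a)$ by \eqref{eq:robot_dynamic_model_a}, both of which depend only on $Q_j \in \bm{Q}^s$ and known inputs.

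I expect the main obstacle to be the invertibility step, which is exactly where Lemma~\ref{lem:noncollinear_ingoing_bearings}, and through it Assumption~\ref{ass:uniform_IAR}, is needed. With that lemma available, everything else is direct differentiation.
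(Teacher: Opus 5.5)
Your proof is correct and follows essentially the same route as the paper. In both, \eqref{eq:linear_eqs} is compatible because it comes from differentiating $\mathfrak{q}_{ji} = Q_j\beta_{ji}$, and existence and uniqueness follow from $P_i \succeq \ell\bm{I}$ via Lemma~\ref{lem:noncollinear_ingoing_bearings}. The only difference is that the paper passes through the normal equations $M_i^\top M_i = P_i$ (using $M_{ji}^2 = M_{ji}$), while you sum the block equations directly; your version gives $\sum_j b_{ji}$ on the right-hand side without the paper's implicit step $M_{ji}b_{ji} = b_{ji}$.
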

\begin{proof}
    System \eqref{eq:linear_eqs} has the form $M_i\dot{q}_i = b_i$ where $M_i\!\coloneqq\![M_{ji}]_{j \in \Iset^s_i}$ and $b_i\!\coloneqq\![b_{ji}]_{j \in \Iset^s_i}$.
    This system is compatible since it is derived from the equality $\mathfrak{q}_{ji} = Q_j \beta_{ji}$.
    Also, $M_i^\top M_i = P_i\!\succ\!0$ from $M_{ji}^2\!=\!M_{ji}$ and \eqref{eq:noncollinear_ingoing_bearings}. Thus, \eqref{eq:eta_follower} is the unique solution of the normal equation $P_i \dot{q}_i = M_i^\top b_i$.
\end{proof}
\begin{remark}
    \eqref{eq:eta_follower} provides a least-squares solution that is exact in the noiseless case and is especially useful in the case of noisy bearing and bearing-rate measurements.
\end{remark}
Finally, the estimated coordinate velocity $\hat{\upsilon}_i, i \in \Vset^f$ is
\begin{equation}
    \begin{split}
        \!\!\hat{\upsilon}_i &\coloneqq \eta^f_i(\hat{\bm{q}}, \hat{\bm{Q}}^s) =\big(\textstyle\sum_{j \in \Iset^s_i} \hat{M}_{ji}\big)^{-1}
        \textstyle\sum_{j \in \Iset^s_i} \hat{b}_{ji}, \\
        \!\!\!\hat{M}_{ji}&\!\coloneqq\!\Pi(\hat{Q}_j \beta_{ji}), \hat{b}_{ij}\!\coloneqq\!\hat{M}_{ji}\hat{\upsilon}_j\!+\!\|\hat{q}_{ji}\|(\hat{\Psi}_j\beta_{ji}\!+\!\hat{Q}_j \dot{\beta}_{ji})
    \end{split}
    \label{eq:vel_follower}
\end{equation}
with $\hat{\upsilon}_j$ from \eqref{eq:vel_leader} and $\hat{\Psi}_j \coloneqq \hat{Q}_j S(\omega_j - \hat{Q}_j^\top \omega_a)$ from \eqref{eq:robot_dynamic_model_a}.
\begin{remark}
    Robot $i \in \Vset^f$ can compute \eqref{eq:vel_follower} by receiving $\hat{M}_{ji}, \hat{b}_{ji}$ from $j \in \Iset_i^s$, which are locally available to $j$.
    \label{rmk:vel_est_dist}
\end{remark}
\begin{remark}
    In practice, robot $j \in \Iset_i^s$ can estimate $\dot{\beta}_{ji}$ from consecutive camera measurements using feature tracking or optical flow, followed by suitable filtering.
\end{remark}
\begin{remark}
    Velocities $u_a, \omega_a$ appear because all poses are expressed in the moving anchor frame. Their role is to compensate for $a$'s motion in the feedforward terms.
\end{remark}

\subsection{Orientation Correction}

\subsubsection{Sensing robots}
For $i \in \Vset^s$, it is possible to estimate $\hat{Q}_i$ by exploiting the relationship $\beta_{ij}^\top Q_i^\top q_{ij} = \|q_{ij}\|$ \eqref{eq:bearing_definition}, $\forall j \in \Oset_i$.
Indeed, consider the following cost function and the $\SO$ gradient-based update for $i \in \Vset^s$,
\begin{equation}
\begin{split}
    &\!\!\mathcal{J}^s_i(\hat{\bm{q}}, \hat{Q}_i) \coloneqq \textstyle\sum_{j \in \Oset_i} \big(\|\hat{q}_{ij}\| - \beta_{ij}^\top \hat{Q}_i^\top \hat{q}_{ij}\big), 
    \\
    &\!\!\dot{\hat{Q}}_i\!=\!-\nabla_{\!\hat{Q}_i}\!\mathcal{J}^s_i\!=\!\hat{Q}_i S(\Omega^s_i), \, \Omega^s_i\!\coloneqq\!\textstyle\sum_{j \in \Oset_i} S(\beta_{ij}) \hat{Q}_i^\top \hat{q}_{ij}.
\end{split}    
\label{eq:grad_ori_leaders}
\end{equation}
Note that $\mathcal{J}^s_i \geq 0$ and $\mathcal{J}^s_i(\bm{q}, Q_i) = 0$.

\subsubsection{Free robots}
\label{sec:orientation_followers}

For $i\!\in\!\Vset^f$, \eqref{eq:grad_ori_leaders} is not appropriate since it requires $|\Oset_i| \geq 2$ for full error correction.
However, $Q_i$ can be estimated by comparing the body-frame control action $u_i$ with the estimated coordinate velocity $\hat{\upsilon}_i$ \eqref{eq:vel_follower}.
To do this, define $\gamma_i \coloneqq Q_i u_i$ and its reconstruction from estimates $\hat{\gamma}_i \coloneqq \hat{\upsilon}_i + u_a + S(\omega_a) \hat{q}_i$ \eqref{eq:robot_dynamic_model_a}.
We propose the following cost function and the $\SO$ gradient-based update,
\begin{equation}
\begin{split}
    &\!\mathcal{J}^f_i(\hat{\upsilon}_i, \hat{Q}_i) \coloneqq \|u_i\| \|\hat{\gamma}_i\| - u_i^\top \hat{Q}_i^\top \hat{\gamma}_i, 
    \\
    &\dot{\hat{Q}}_i =-\nabla_{\hat{Q}_i}\!\mathcal{J}^f_i = \hat{Q}_i S(\Omega^f_i), \ \Omega^f_i\!\coloneqq\!S(u_i) \hat{Q}_i^\top \hat{\gamma}_i.
\end{split}
\label{eq:grad_ori_followers}
\end{equation}
Note that $\mathcal{J}^f_i\!\geq0$ and $\mathcal{J}^f_i(\dot{q}_i, Q_i)\!=0$.

\subsection{Full Pose Observer}
\label{sec:observer_formulation}

Building on the preceding analysis, we introduce the following observer.
Set $\kappa_q, \kappa_Q>0$, then
\begin{align}
    \begin{split}
        \dot{\hat{q}}_i &= \hat{\upsilon}_i -\kappa_q \nabla_{\hat{q}_i} \mathcal{L}(\hat{\bm{q}}),  \qquad \qquad \quad \ i \in \Vset,
        \\
        \dot{\hat{Q}}_i& = \hat{Q}_i S\big(\omega_i - \hat{Q}_i^\top \omega_a + \kappa_Q \Omega^s_i\big),  \quad i \in \Vset^s,
    \end{split}
    \label{eq:upstream}
    \\
    \dot{\hat{Q}}_i &= \hat{Q}_i S\big(\omega_i - \hat{Q}_i^\top \omega_a + \kappa_Q \Omega^f_i\big), \quad i \in \Vset^f.
    \label{eq:downstream}
\end{align}
\begin{remark}
    The observer \eqref{eq:upstream}--\eqref{eq:downstream} has a cascade structure in which the subsystem \eqref{eq:upstream} is upstream of the subsystem \eqref{eq:downstream}, which does not feed back into it. 
    This follows from the observation that $\hat{\upsilon}_i$ (\eqref{eq:vel_leader} and \eqref{eq:vel_follower}) and $\Omega_i^s$ \eqref{eq:grad_ori_leaders} depend on $\hat{\bm{q}}, \hat{\bm{Q}}^s$ only, whereas $\Omega_i^f$ \eqref{eq:grad_ori_followers} depends on $\hat{\bm{q}}, \hat{\bm{Q}}^s$, and $\hat{\bm{Q}}^f$.
    \label{rmk:cascade}
\end{remark}
\begin{remark}
    The observer \eqref{eq:upstream}--\eqref{eq:downstream} is distributed: it only requires information exchange between neighbors in the communication graph. See \eqref{eq:grad_ori_leaders}, \eqref{eq:grad_ori_followers}, and Remarks~\ref{rmk:pos_est_dist} and~\ref{rmk:vel_est_dist}.
\end{remark}

\section{Stability Analysis}
\label{sec:stability}

Define the estimation errors as $\tilde{q}_i\!\coloneqq\!\hat{q}_i-q_i$ and $\tilde{Q}_i\!\coloneqq\!Q_i^\top \hat{Q}_i$, and let $\tilde{\bm{q}}\!\coloneqq\![\tilde{q}_i]_{i\in\Vset}$, $\tilde{\bm{Q}}\!\coloneqq\![\tilde{Q}_i]_{i\in\Vset}$.
Let $\bm{\eta}  \coloneqq [\eta_i^\ast]_{i \in \Vset} \in \R^{3N}$ and $\bm{\Omega} \coloneqq [\Omega_i^\ast]_{i \in \Vset}$ where $\ast \in \{s, f\}$.
Also, for simplicity, we denote $\bm{\zeta}(\hat{\bm{q}}) \coloneqq \nabla_{\hat{\bm{q}}} \mathcal{L}(\hat{\bm{q}})$.
Thus, the error dynamics of \eqref{eq:upstream}--\eqref{eq:downstream} are
\begin{align}
    \dot{\tilde{\bm{q}}} &= \bm{\eta}(\hat{\bm{q}}, \hat{\bm{Q}}^s) - \bm{\eta}(\bm{q}, \bm{Q}^s) -\kappa_q \bm{\zeta}(\hat{\bm{q}}), 
    \label{eq:error_dynamics_pos}
    \\
    \dot{\tilde{\bm{Q}}}^s &= {-\bm{S}(\bm{\omega}^s) \circ \tilde{\bm{Q}}^s} + \tilde{\bm{Q}}^s \circ \bm{S}({\bm{\omega}^s + }\kappa_Q \bm{\Omega}^s(\hat{\bm{q}}, \hat{\bm{Q}}^s)),
    \label{eq:error_dynamics_ori_leaders}
    \\
    \dot{\tilde{\bm{Q}}}^f &= {-\bm{S}(\bm{\omega}^f) \circ \tilde{\bm{Q}}^f} + \tilde{\bm{Q}}^f \circ \bm{S}({\bm{\omega}^f + }\kappa_Q \bm{\Omega}^f(\hat{\bm{q}}, \hat{\bm{Q}})).
    \label{eq:error_dynamics_ori_followers}
\end{align}

\begin{lemma}
\label{lem:equilibrium_2}
The zero-error trajectory $(\tilde{\bm{q}}, \tilde{\bm{Q}})= \bm{1}_N \otimes (\bm{0}, \bm{I}_3)$  is invariant for the error dynamics \eqref{eq:error_dynamics_pos}--\eqref{eq:error_dynamics_ori_followers}.
\end{lemma}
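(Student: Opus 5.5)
The plan is to substitute the zero-error trajectory into the right-hand sides of \eqref{eq:error_dynamics_pos}--\eqref{eq:error_dynamics_ori_followers} and check that each vanishes identically. Then $(\tilde{\bm{q}},\tilde{\bm{Q}})=\bm{1}_N\otimes(\bm{0},\bm{I}_3)$ solves the error system from any initial time, and by uniqueness of solutions (the right-hand sides are locally Lipschitz by Assumptions \ref{ass:separation} and \ref{ass:dynamics_regularity}) it is the only solution starting there, which is invariance. Following Remark \ref{rmk:cascade}, I would proceed in cascade order: first positions and sensing orientations, then free orientations.

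\textbf{Positions and sensing-robot orientations.} On the zero-error set we have $\hat{\bm{q}}=\bm{q}$ and $\hat{\bm{Q}}=\bm{Q}$.
\begin{itemize}
\item The difference $\bm{\eta}(\hat{\bm{q}},\hat{\bm{Q}}^s)-\bm{\eta}(\bm{q},\bm{Q}^s)$ is trivially zero. For the free-robot block one also needs $\hat{\Psi}_j=\dot Q_j$, which follows from \eqref{eq:robot_dynamic_model_a}, and invertibility of $\sum_j \hat M_{ji}=P_i(\bm q)$, which is Lemma \ref{lem:noncollinear_ingoing_bearings}.
\item For $\bm{\zeta}(\bm{q})=\nabla\mathcal{L}(\bm{q})$: at $\hat{\bm q}=\bm q$ every $\tilde\alpha_{ijk}=0$ and $\tilde q_a=\tilde q_b=\tilde q_c=0$. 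Here $q_a=0$ by definition of the anchor frame, and $q_b,q_c$ are the measured values. Hence every term in \eqref{eq:position_gradient_i} vanishes.
\item For $\Omega_i^s$: with $\hat Q_i=Q_i$ we get $\hat Q_i^\top\hat q_{ij}=Q_i^\top q_{ij}=\|q_{ij}\|\beta_{ij}$ by \eqref{eq:bearing_definition}, so $S(\beta_{ij})\beta_{ij}=0$ and $\bm\Omega^s=0$.
\item Then \eqref{eq:error_dynamics_ori_leaders} becomes $-S(\omega_i)+S(\omega_i)=0$ for $\tilde Q_i=\bm I_3$.
\end{itemize}

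\textbf{Free-robot orientations.} Given the above, $\hat\upsilon_i=\eta_i^f(\bm q,\bm Q^s)=\dot q_i$ by Proposition \ref{pro:linear_sys_solution}. Therefore $\hat\gamma_i=\dot q_i+u_a+S(\omega_a)q_i=Q_iu_i=\gamma_i$. With $\hat Q_i=Q_i$ this gives $\Omega_i^f=S(u_i)Q_i^\top Q_iu_i=S(u_i)u_i=0$. Substituting into \eqref{eq:error_dynamics_ori_followers} with $\tilde Q_i=\bm I_3$ gives zero.

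\textbf{Main obstacle.} The only delicate point is making sure the error dynamics are written consistently, so that each $\tilde{Q}_i=\bm I_3$ block indeed reduces to $-S(\omega_i)+S(\omega_i)$. This requires checking that the $\hat Q_i^\top\omega_a$ term in the observer cancels against $Q_i^\top\omega_a$ from \eqref{eq:robot_dynamic_model_a} when $\hat Q_i=Q_i$, which I take as built into the derivation of \eqref{eq:error_dynamics_ori_leaders}--\eqref{eq:error_dynamics_ori_followers}. A second point to verify is that the measured quantities ($\alpha_{ijk}$, $q_b$, $q_c$, $\beta$, $\dot\beta$) are the true ones, i.e. the noiseless setting together with Assumption \ref{ass:exact_consensus}.
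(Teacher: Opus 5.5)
Your proposal is correct and is essentially the paper's proof. The paper just notes that $\bm{\zeta}(\bm{q})=0$, $\bm{\Omega}^s(\bm{q},\bm{Q}^s)=0$ and $\bm{\Omega}^f(\bm{q},\bm{Q})=0$ at zero error, so the right-hand sides of \eqref{eq:error_dynamics_pos}--\eqref{eq:error_dynamics_ori_followers} vanish. Your extra checks ($\hat{\Psi}_j=\dot{Q}_j$, $\hat{\gamma}_i=Q_iu_i$, uniqueness of solutions) only make explicit what the paper leaves implicit, and the $\omega_a$ cancellation you flag holds for every $\tilde{Q}_i$, not only at $\hat{Q}_i=Q_i$, since $\tilde{Q}_i S(\tilde{Q}_i^\top Q_i^\top\omega_a)=S(Q_i^\top\omega_a)\tilde{Q}_i$.
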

\begin{proof}
    At zero error, one has $\bm{\zeta}(\bm{q})\!=\!0$~\eqref{eq:position_gradient_i}, $\bm{\Omega}^s(\bm{q}, \bm{Q}^s) = 0$~\eqref{eq:grad_ori_leaders}, and $\bm{\Omega}^f(\bm{q}, \bm{Q}) = 0$~\eqref{eq:grad_ori_followers}; hence $(\dot{\tilde{\bm{q}}}, \dot{\tilde{\bm{Q}}}) = 0$.
\end{proof}

Before establishing the main theorem, we define small orientation errors in local coordinates $\xi_i\in\R^3$, $i \in \Vset$, such that $\tilde{Q}_i = \bm{I} + S(\xi_i) + o(\|\xi_i\|)$.
Then, let  $\bm{\xi}^s \coloneqq [\xi_i]_{i \in \Vset^s}$.

\begin{theorem}[Stability of the full pose observer]
\label{thm:full_cascade_lues}
    Let $(\bm{q}(t), \bm{Q}{(t)})\in\SE^N$, $t \geq 0$, satisfy Assumptions \ref{ass:separation}--{\ref{ass:exact_consensus}}. 
    Consider the observer \eqref{eq:upstream}--\eqref{eq:downstream} and fix the orientation gain $\kappa_Q$.
    Then, if the position gain $\kappa_q\!>\!0$ is sufficiently large, the tracking error with respect to the true trajectory $(\bm{q}(t), \bm{Q}{(t)})$ is locally uniformly exponentially stable (LUES).
\end{theorem}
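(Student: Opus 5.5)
We linearize the error dynamics \eqref{eq:error_dynamics_pos}--\eqref{eq:error_dynamics_ori_followers} along the zero-error trajectory, which is invariant by Lemma~\ref{lem:equilibrium_2}, using the local coordinates $\xi_i$. By Assumption~\ref{ass:dynamics_regularity} all coefficients are bounded. Uniform exponential stability of the resulting linear time-varying system then gives LUES of the nonlinear system, by the standard Lyapunov-linearization theorem for nonautonomous systems. This requires the Jacobian to be bounded and Lipschitz uniformly in $t$, which follows from Assumptions~\ref{ass:separation}, \ref{ass:noncollinear_outgoing_bearings} and~\ref{ass:dynamics_regularity} and from Lemma~\ref{lem:noncollinear_ingoing_bearings}, since $\sum_j\hat M_{ji}$ stays uniformly invertible near zero error.

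Following Remark~\ref{rmk:cascade}, the linearization is block lower-triangular. The upstream state is $x_1=(\tilde{\bm q},\bm\xi^s)$ and the downstream state is $x_2=\bm\xi^f$.

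\textbf{Upstream block.} The position equation linearizes to $\dot{\tilde{\bm q}}=-\kappa_q\bm H(\bm q(t))\tilde{\bm q}+E_1(t)\tilde{\bm q}+E_2(t)\bm\xi^s$. Here $E_1,E_2$ are the bounded Jacobians of $\bm\eta$. They are bounded uniformly in $\kappa_q$ and independent of it.

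For $i\in\Vset^s$, linearizing \eqref{eq:grad_ori_leaders} in the rotated frame gives $\dot\xi_i=-S(\omega_i)\xi_i-\kappa_Q G_i(t)\xi_i+F_i(t)\tilde{\bm q}$, up to sign conventions. The matrix is $G_i=\sum_{j\in\Oset_i}\|q_{ij}\|\,\Pi(\mathfrak q_{ij})$ expressed in the body frame. By Assumptions~\ref{ass:separation} and~\ref{ass:noncollinear_outgoing_bearings}, $G_i\succeq c\bm I$ uniformly. For the anchor, $\tilde q_a$ is pinned by the $\kappa_s$ term and the relevant matrix comes from neighbors $b,c$ via item 1. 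The skew term $S(\omega_i)$ drops out of $\frac{d}{dt}\|\xi_i\|^2$.

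We take $V_1=\|\tilde{\bm q}\|^2+\|\bm\xi^s\|^2$. By Lemma~\ref{lem:positive_hessian},
\[
\dot V_1\le-2(\kappa_q\ell-\|E_1\|)\|\tilde{\bm q}\|^2-2\kappa_Q c\|\bm\xi^s\|^2+2(\|E_2\|+\|F\|)\|\tilde{\bm q}\|\|\bm\xi^s\|.
\]
This is negative definite once $\kappa_q$ is large, which is exactly where the "sufficiently large $\kappa_q$ with $\kappa_Q$ fixed" hypothesis enters. Note that $F$ grows with $\kappa_Q$, so with $\kappa_Q$ fixed we choose $\kappa_q\ell>\|E_1\|+(\|E_2\|+\|F\|)^2/(4\kappa_Qc)$ via Young's inequality.

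\textbf{Downstream block.} For $i\in\Vset^f$, the reconstructed $\hat\gamma_i$ depends only on upstream errors. Linearizing $\Omega^f_i=S(u_i)\tilde Q_i^\top Q_i^\top\hat\gamma_i$ gives
\[
\dot\xi_i=-S(\omega_i)\xi_i-\kappa_Q S(u_i)^\top S(u_i)\xi_i+D_i(t)x_1.
\]
We pass to the rotated variable $z_i=Q_i\xi_i$, up to the anchor-frame terms, to remove the skew term. The homogeneous system then becomes $\dot z_i=-\kappa_Q S(\gamma_i)^\top S(\gamma_i)z_i$ plus a skew part. Since $\gamma_i=Q_iu_i$ and $\|S(\gamma_i)x\|=\|S(\dot p_i)R_a x\|$, Assumption~\ref{ass:excitation} transfers to a PE condition on $S(\gamma_i)$, with $R_a$ orthogonal and the skew transport bounded.

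The standard PE result for $\dot z=-\Phi(t)^\top\Phi(t)z$ with bounded, persistently exciting $\Phi$ then yields uniform exponential stability. Concretely, we build a converse Lyapunov function $V_2=z^\top P(t)z$ with $P$ bounded above and below.

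This PE step is the main obstacle. One must check carefully that the rotating-frame change of variables preserves the PE integral bound uniformly, because the $S(\omega_i)$ and anchor terms rotate the excitation direction. If the direct transfer fails, we first try to absorb the rotation by noting that the error kinematics in the inertial frame are exactly driven by $S(\dot p_i)$.

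\textbf{Cascade.} The cascade of a UES upstream system with a UES downstream system driven by a bounded coupling $D(t)$ is UES. This can be shown with the composite $V=V_2+\mu V_1$ for large $\mu$, or by the standard cascade lemma. Hence the linearization is UES, and the nonlinear error dynamics are LUES.
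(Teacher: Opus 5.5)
Your proposal follows the paper's proof essentially step for step. The ingredients match: the same block lower-triangular linearization, and the upstream block closed with a quadratic Lyapunov function under the identical Young/Sylvester condition on $\kappa_q$. That condition includes your observation that the $\tilde{\bm q}$--$\bm\xi^s$ coupling scales with $\kappa_Q$, which is why $\kappa_Q$ is fixed first. You also use the same follower linearization, a PE argument after removing the transport term, and a cascade. The only structural difference is the last step. You establish UES of the full linear time-varying system, via a composite Lyapunov function or cascade lemma, and then invoke the indirect method for nonautonomous systems. The paper instead proves upstream LUES directly and treats the upstream error as an exponentially decaying input to the follower dynamics via variation of constants. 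Your route is slightly cleaner and makes the required uniform Lipschitz regularity explicit.

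One step of your main line is false as stated, and your suspicion about it is justified. With $z_i=Q_i\xi_i$ you obtain $\dot z_i=-S(\omega_a)z_i-\kappa_Q S(\gamma_i)^\top S(\gamma_i)z_i$. However, Assumption~\ref{ass:excitation} does not transfer to persistent excitation of $S(\gamma_i)$, because $\int S(\gamma_i)^\top S(\gamma_i)\,d\tau=\int R_a^\top S(\dot p_i)^\top S(\dot p_i)R_a\,d\tau$ with $R_a$ time-varying. The pointwise identity $\|S(\gamma_i)x\|=\|S(\dot p_i)R_a x\|$ does not help, since $R_a(\tau)x$ moves. For example, take $R_a(\tau)=\exp(\tau S(\mathfrak e_3))$ and $\dot p_i(\tau)=R_a(\tau)\mathfrak e_1$. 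Then $\dot p_i$ is persistently exciting, but $\gamma_i\equiv\mathfrak e_1$, so the integral equals $T(\bm I_3-\mathfrak e_1\mathfrak e_1^\top)$, which is singular. Moreover, the standard PE result you quote is for $\dot z=-\Phi^\top\Phi z$ without the extra skew term.

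Your fallback is the correct fix: rotate once more by $R_a$, i.e., use $w_i=R_a z_i=R_i\xi_i$. Since $R_a\gamma_i=R_iu_i=\dot p_i$, this gives exactly $\dot w_i=-\kappa_Q S(\dot p_i)^\top S(\dot p_i)w_i$. The assumption then applies verbatim, and UES carries back to $\xi_i$ because $\|w_i\|=\|\xi_i\|$. This is precisely the paper's change of variables $\phi_i(\tau)=R_i(t)^\top R_i(\tau)\xi_i(\tau)$, up to the immaterial constant factor $R_i(t)^\top$.
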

\begin{proof}
    \textit{Step 1:}
    We begin by establishing the LUES property of the upstream observer  \eqref{eq:upstream}.
    The linearized error dynamics \eqref{eq:error_dynamics_pos}--\eqref{eq:error_dynamics_ori_leaders} near $(\tilde{\bm{q}}, \tilde{\bm{Q}}) = (\bm{0}, \bm{I})$ are
    \begin{equation*}
        \begin{bmatrix}
            \dot{\tilde{\bm{q}}} \\ \dot{\bm{\xi}}^s
        \end{bmatrix}
        =
        \begin{bmatrix}
            \bm{C} - \kappa_q \bm{H} & \bm{D} \\
            - \kappa_Q \bm{B} & {- \bm{W}} - \kappa_Q \bm{K}
        \end{bmatrix}
        \begin{bmatrix}
            \tilde{\bm{q}} \\ \bm{\xi}^s
        \end{bmatrix} + o(\|[\tilde{\bm{q}}; \bm{\xi}^s]\|).
    \end{equation*}
    Here, $\bm{H}(t)$ is the position Hessian, which is uniformly positive definite by Lemma \ref{lem:positive_hessian}.
    $\bm{C}(t), \bm{D}(t)$ are the jacobians of $\bm{\eta}(\bm{q}, \bm{Q}^s)$ with respect to $\bm{q}$ and the local coordinates of $\bm{Q}^s$ evaluated at $\bm{q}(t)$ and $\bm{Q}^s(t)$, respectively.
    Both $\bm{C}(t), \bm{D}(t)$ are uniformly bounded since the maps $\eta^s, \eta^f$ are uniformly smooth in a neighborhood of the true trajectory.

    Next, we derive $\bm{K}(t)$, $\bm{B}(t)$, and $\bm{W}(t)$.
    From \eqref{eq:grad_ori_leaders},
    \begin{align*}
        \Omega^s_i &= \textstyle\sum_{j \in \Oset_i} S(\beta_{ij}) \tilde{Q}_i^\top (\|q_{ij}\| \beta_{ij} + Q_i^\top \tilde{q}_{ij}) =
        \\ 
        & = \textstyle\sum_{j \in \Oset_i} \left(S(\beta_{ij}) Q_i^\top \tilde{q}_{ij} - \|q_{ij}\| \Pi(\beta_{ij}) \xi_i\right) + o(\|[\tilde{\bm{q}}; \xi_i]\|). 
    \end{align*}
    where we used $S(\beta_{ij})S(\xi_i)\beta_{ij} =
    \Pi(\beta_{ij})\xi_i$.
    Now, the linearized error dynamics of \eqref{eq:error_dynamics_ori_leaders} satisfy
    \begin{align*}
        \dot{\xi}_i & = {-S(\omega_i) \xi_i} + \kappa_Q \Omega^s_i +o(\|\xi_i\| + \|\Omega^s_i\|)
        \\
        & = {-S(\omega_i) \xi_i} + \kappa_Q \left(\textstyle\sum_{j \in \Oset_i}\!B_{ij} \tilde{q}_{ij} - K_i \xi_i\right) + o(\|[\tilde{\bm{q}}; \xi_i]\|),
    \end{align*}
    where $K_i \coloneqq \textstyle\sum_{j\in\mathcal O_i} \|q_{ij}\| \Pi(\beta_{ij})$ and $B_{ij} \coloneqq S(\beta_{ij}) Q_i^\top$.
    It follows that $\bm{K}(t)\coloneqq\operatorname{diag}(K_i)_{i\in\Vset^s}$ is uniformly positive definite, from Assumptions \ref{ass:separation} and \ref{ass:noncollinear_outgoing_bearings}.
    Also, $\bm{B}(t)$ is such that block $[\bm{B}]_{ii} = \sum_{j \in \Oset_i} B_{ij}$, $[\bm{B}]_{ij} = -B_{ij}$ if $j \in \Oset_i$, and zero elsewhere. 
    Then, $\bm{B}(t)$ is uniformly bounded since it only depends on unit-norm bearings and rotation matrices.
    {Finally, $\bm{W}(t) \coloneqq \operatorname{diag}(S(\omega_i))_{i \in \Vset^s}$ is skew-symmetric and uniformly bounded.}

    Hence, the involved matrices satisfy uniformly for $t \geq 0$, $\bm{H}(t) \succeq \ell_h \bm{I}$, $\bm{K}(t) \succeq \ell_k \bm{I}$, $\|\bm{B}(t)\| \leq \mu_b$, $\|\bm{C}(t)\| \leq \mu_c$, $\|\bm{D}(t)\| \leq \mu_d$, {$\|\bm{W}(t)\| \leq \mu_w$} for some $\ell_h, \ell_k >0$ and $\mu_b, \mu_c, \mu_d, \mu_w \geq 0$.
    Now, let $\bm{e}\coloneqq [\tilde{\bm{q}}; \bm{\xi}^s]$ denote the upstream error, and consider the Lyapunov function $V(\bm{e}) \coloneqq \frac{1}{2} \|\bm{e}\|^2$.
    Then,
    \begin{equation}
    \begin{split}
        \!\!\dot{V}(\bm{e})
        & = \tilde{\bm{q}}^\top (\bm{C} - \kappa_q \bm{H}) \tilde{\bm{q}} + \tilde{\bm{q}}^\top (\bm{D} - \kappa_Q \bm{B}^\top) \bm{\xi}^s 
        \\
        -& \kappa_Q  (\bm{\xi}^s)^\top \bm{K} \bm{\xi}^s + o(\|\bm{e}\|^2) \leq (\mu_c - \kappa_q \ell_h) \|\tilde{\bm{q}}\|^2
        \\
        +& (\mu_d + \kappa_Q \mu_b) \|\tilde{\bm{q}}\| \|\bm{\xi}^s\| 
        - \kappa_Q \ell_k \|\bm{\xi}^s\|^2 + o(\|\bm{e}\|^2).
        \label{eq:lyapunov_derivative}
    \end{split}
    \end{equation}
    On the right-hand side of \eqref{eq:lyapunov_derivative}, the higher-order terms are dominated by the quadratic term, which, according to Sylvester's criterion, is negative definite if 
    $\kappa_q \ell_h  > \mu_c + {(\mu_d + \kappa_Q \mu_b)^2}/({4 \kappa_Q \ell_k})$.
    Hence, for sufficiently small errors and for sufficiently large $\kappa_q$, there exists $\rho > 0$ such that $\dot{V} \leq - \rho\|\bm{e}\|^2 = -2 \rho V$.
    It follows that $V(t )\leq V(0) e^{-2\rho t}$, thus $\|\bm{e}(t)\| \leq C_e e^{-\lambda_e t}  \|\bm{e}(0)\|$ for some constants $C_e \geq 1$ and $\lambda_e > 0$.
    Because all constants in the estimates are uniform over time, the upstream error is LUES.

    \textit{Step 2:} Now, we consider the error dynamics \eqref{eq:error_dynamics_ori_followers} with an ideal upstream observer, i.e., $\hat{\bm{q}} = \bm{q}$ and $\hat{\bm{Q}}^s = \bm{Q}^s$.
    We prove the result for each $i \in \Vset^f$ separately since each subsystem is independent.
    For each $i\in\Vset^f$ the reconstructed velocity is exact, that is $\hat{\upsilon}_i = \dot{q}_i$, and one gets $\hat{\gamma}_i = Q_i u_i$.
    Then, from \eqref{eq:grad_ori_followers}, $\Omega_i^f = S(u_i)\tilde{Q}_i^\top u_i$.
    We now linearize \eqref{eq:error_dynamics_ori_followers} around $\tilde{Q}_i = \bm{I}$.
    It follows that $\Omega_i^f
    = -S(u_i)^\top S(u_i) \xi_i + o(\|\xi_i\|)$.
    Thus, the linearized orientation error of \eqref{eq:error_dynamics_ori_followers} satisfies
    \begin{equation*}
        \dot{\xi}_i = - G_i(t) \xi_i + o(\|\xi_i\|), \ G_i \coloneqq {S(\omega_i)} + \kappa_Q S(u_i)^\top S(u_i).
    \end{equation*}
    Now, fix an initial time $t \geq 0$, and let $\tau \in [t, t+T]$.
    Consider the change of coordinates $\phi_i(\tau) \coloneqq \Phi_i(\tau) \xi_i(\tau)$, where $\Phi_i(\tau) \coloneqq R_i(t)^\top R_i(\tau) \in \SO$.
    Here, $\Phi_i(\tau)$ is used to discount the transport of the error coordinates induced by $\omega_i$.
    Since $\dot{p}_i(\tau) = R_i(\tau) u_i(\tau)$, it follows that
    \begin{align*}
        \tfrac{d\phi_i}{d\tau} 
        =\!- \kappa_Q R_i^\top(t) S(\dot{p}_i(\tau))^\top S(\dot{p}_i(\tau)) R_i(t) \phi_i(\tau) + o(\|\phi_i\|).
    \end{align*}
    If Assumption \ref{ass:excitation} is satisfied over $[t, t+T]$, that is, $\dot{p}_i$ is PE, then this linear system is UES.

    \textit{Step 3:} 
    Now, we consider the error dynamics \eqref{eq:error_dynamics_ori_followers} when the upstream error $\bm{e}(t)$ is nonzero but exponentially decaying, as established by \textit{Step 1}.
    Consider $i\in\Vset^f$, and let
    \begin{equation*}
        r_i\!\coloneqq\!\hat{\upsilon}_i - \dot{q}_i =\eta^f_i(\hat{\bm{q}},\hat{\bm{Q}}^s) - \eta^f_i(\bm{q},\bm{Q}^s)
        = C_i \tilde{\bm{q}} + D_i \bm{\xi}^s + o(\|\bm{e}\|),
    \end{equation*}
    where $C_i(t), D_i(t)$ are the jacobian matrices of $\eta^f_i$ with respect to $\bm{q}$ and the local coordinates of $\bm{Q}^s$ evaluated at $\bm{q}(t)$ and $\bm{Q}^s(t)$, respectively.
    These matrices are uniformly bounded due to the uniform smoothness of the map $\eta^f_i$.
    Thus, there exists $\rho_q, \rho_{\xi}, \rho_e, \rho_0, \lambda_e>0$ such that
    \begin{equation*}
    \|r_i(t)\| \leq \rho_q \|\tilde{\bm{q}}(t)\| + \rho_{\xi} \|\bm{\xi}^s(t)\| \leq \rho_e\|\bm{e}(t)\| \leq \rho_0 e^{-\lambda_e t}.
    \end{equation*}
    On the other hand, it holds that $\hat{\gamma}_i = Q_i u_i + s_i, $
    where $s_i \coloneqq r_i + {S(\omega_a) \tilde{q}_i}$ decays exponentially.
    Thus, from \eqref{eq:grad_ori_followers},
    \begin{equation*}
        \Omega_i^f =  S(u_i)\tilde{Q}_i^\top u_i + S(u_i) \tilde{Q}_i^\top Q_i^\top s_i.
    \end{equation*}
    Then, the linearized orientation error of \eqref{eq:error_dynamics_ori_followers} satisfies
    \begin{equation*}
        \dot{\xi}_i = -G_i(t)\xi_i + \kappa_Q S(u_i) Q_i^\top s_i + o(\|[\bm{e}; \xi_i]\|).
    \end{equation*}
    The unperturbed linear system ($s_i\!=\!0$) is UES as shown in \textit{Step 2}.
    Since $s_i(t)$ decays exponentially, a variation-of-constants argument shows that $\xi_i(t)$ also decays exponentially.
    Hence, the nonlinear error dynamics are LUES near $\tilde{Q}_i=\bm{I}$.
    Because the downstream system does not feed back into the upstream subsystem, the full observer is LUES.
\end{proof}

\section{Simulation Results}
\label{sec:simulations}

To validate the proposed estimator, we conducted simulations with $N = 5$ robots in $\mathbb{R}^3$ and the anchor robot was $a=1$.
In the first example (Fig. \ref{fig:simu_1}), the sensing graph was determined by $\Oset_1 = \{2, 3, 4, 5\}$, $\Oset_2 = \{1, 3, 4, 5\}$, $\Oset_3 = \Oset_4 = \Oset_5 = \emptyset$.
So, $\Vset^s = \{1, 2\}$ and $\Vset^f = \{3, 4, 5\}$.

Let $\mathcal{U}(\cdot)$ and $\mathcal{N}(\cdot, \cdot)$ denote the uniform and normal distributions, respectively.
The initial robots' positions were generated as $q_a(0)\!\!=\!\!0$ and $q_i(0)\!\sim\!\mathcal{U}([\SI{0}{\meter}, \SI{20}{\meter}]^3)$.
Orientations were sampled on $\SO$ by setting $Q_a(0)\!=\!\bm{I}$ and $Q_i(0)\!=\!\exp(S(\theta_i))$, with $\theta_i\sim\mathcal{U}(\{x\!:\!\|x\|\!<\!2\pi\,\mathrm{rad}\})$.
The initial position estimates were $\hat{q}_a(0) = 0$ and $\hat{q}_i(0) \sim \mathcal{N}(q_i(0), \SI{4}{\square\meter})$.
The initial orientation estimates were $\hat{Q}_a(0) = \bm{I}$ and $\hat{Q}_i(0) = Q_i(0) \exp(S(\psi_i))$, where $\psi_i\sim\mathcal{U}(\{x\!:\!\|x\|\!<1\,\mathrm{rad}\})$.
The body-frame control actions were $u_1(t) = [0; 0; 1]$, $u_2(t) = [\cos(t/4); \sin(t/4); 0]$, $u_3(t) = [0; \cos(t); \sin(t)]$, $u_4(t) = [\cos(2t); \sin(2t); 0.5]$, and $u_5(t) = [\cos(t); \sin(0.5t); 0.0]$.
The body-frame angular velocities were $\omega_1 = 0.5 \mathfrak{e}_1$, $\omega_2 = \mathfrak{e}_2$, $\omega_3 = \mathfrak{e}_3$, and $\omega_4 = \omega_5 = 0$.
The resulting trajectories are shown in Fig. \ref{fig:simu_1_traj}, where the body-frame axes are omitted for simplicity.
Note that $\dot{p}_3, \dot{p}_4, \dot{p}_5$ are persistently exciting, as per Assumption \ref{ass:excitation}.
Fig \ref{fig:simu_1_err} shows the errors obtained by the proposed observer with gains $\kappa_s = 0.02$, $\kappa_q = 500$, and $\kappa_Q = 2$.
For the orientation error, we plot $\|\psi_i\|= \arccos((\operatorname{tr}(\tilde{Q}_i) - 1)/2)$ where $\tilde{Q}_i = \exp(S(\psi_i))$.
As expected, the errors decay toward zero, demonstrating the effectiveness of the proposed estimator.
Notably, the sensing robots' orientation errors diminish faster than those of the free robots, who rely on error correction over a period rather than instantaneously.

A second case was considered where we set $\omega_i(t) = 0$ and $u_i(t) = \hat{Q}_i^\top(t) [\cos(t); \sin(t/2); 0]$; see Fig. \ref{fig:simu_2}.
As $\hat{Q}_i \to Q_i$ robots converge to a common velocity.
This is done to demonstrate that no relative motion between the robots is required for convergence.
Here, we employed a different sensing graph having $\Oset_1 = \{2, 4, 5\}$, $\Oset_2 = \{1, 3, 4\}$, $\Oset_3 = \{1, 2, 5\}$, and $\Oset_4 = \Oset_5 = \emptyset$.
Hence, $\Vset^s = \{1, 2, 3\}$ and $\Vset^f = \{4, 5\}$.
Fig. \ref{fig:simu_2_err} shows the errors decaying toward zero.
Two sharp decreases in $\psi_i$, $i \in \Vset^f$, can be observed, beginning at $t \approx \SI{1.5}{\second}$ and $t \approx \SI{5}{\second}$.
These instants coincide with a change in motion direction, illustrating the PE condition.
Note that the sensing graphs employed in these examples are IAR; however, they are neither IBR nor compatible with the sensing structures required in \cite{VanTran2020AUT,Cao2021AUT,Boughellaba2022CDC,Boughellaba2023CSL}.
This underscores the method’s versatility for sparse, unstructured sensing networks found in practice.

\begin{figure}
    \centering
    \begin{subfigure}{0.49\columnwidth}
        \includegraphics[width=\columnwidth]{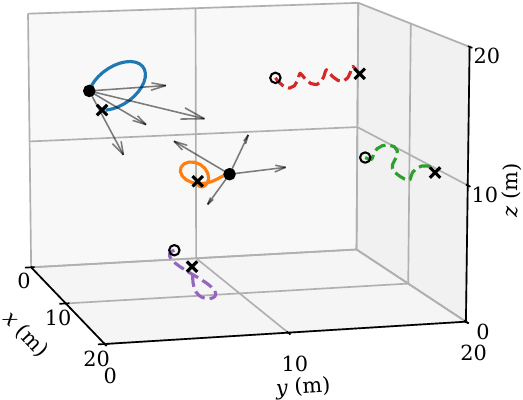}
        \caption{Robot trajectories.}
        \label{fig:simu_1_traj}
    \end{subfigure}
    \begin{subfigure}{0.49\columnwidth}
        \includegraphics[width=\columnwidth]{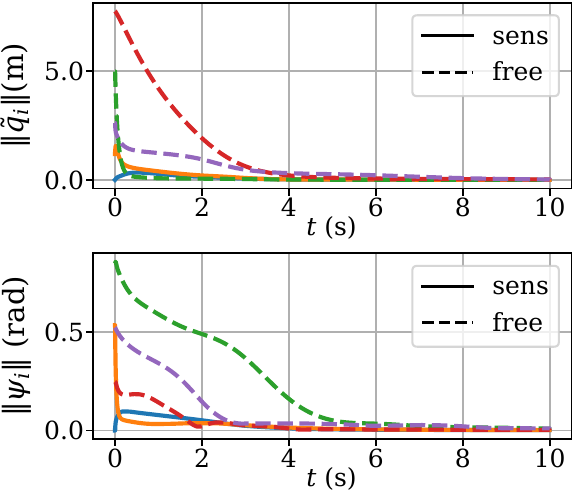}
        \caption{Pose error.}
        \label{fig:simu_1_err}
    \end{subfigure}
    \caption{Simulation results: Case 1. (a): $\bullet$ and $\circ$ represent initial position of $\Vset^s$ and $\Vset^f$, respectively; $\times$ are the final positions; and arrows show the sensing graph.
    }
    \label{fig:simu_1}
\end{figure}

\begin{figure}
    \centering
    \begin{subfigure}{0.49\columnwidth}
        \includegraphics[width=\columnwidth]{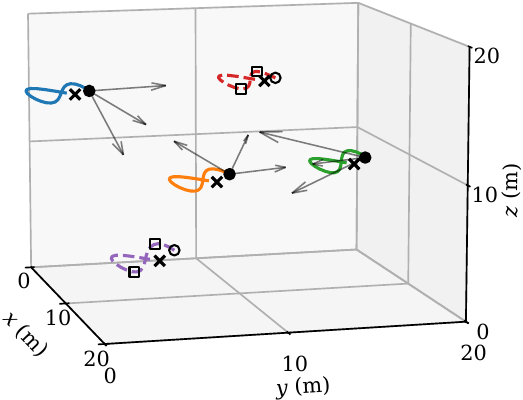}
        \caption{Robot trajectories.}
        \label{fig:simu_2_traj}
    \end{subfigure}
    \begin{subfigure}{0.49\columnwidth}
        \includegraphics[width=\columnwidth]{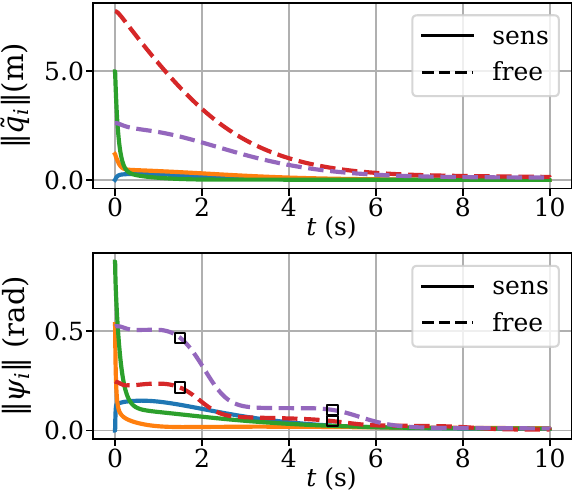}
        \caption{Pose error.}
        \label{fig:simu_2_err}
    \end{subfigure}
    \caption{Simulation results: Case 2. (a)-(b): {\tiny $\square$} marks the instants $t = \SI{1.5}{\second}$ and $t = \SI{5}{\second}$.
    }
    \label{fig:simu_2}
\end{figure}

\section{Conclusions}
\label{sec:conclusions}

This work presents a novel observer for distributed pose estimation in multi-robot systems.
By exploiting IAR and PE motions, the usual requirement that all robots must acquire two or more bearings is relaxed. 
Local exponential stability of the poses was established under PE motions for the free robots. Notably, all positions, as well as the orientations of sensing robots, can be recovered when all robots are static.

In future work, we will analyze the coupled consensus-observer dynamics and characterize the resulting region of attraction. 
We will also extend the stability analysis to encompass time-varying sensing networks.
Finally, we will assess robustness to measurement noise, including noise in bearing and bearing-rate measurements.

\bibliographystyle{IEEEtran}
\bibliography{refs}

\end{document}